\RequirePackage{fix-cm}
\documentclass{article} % For LaTeX2e
\usepackage{iclr2027_conference,times}

\usepackage{amsmath,amsfonts,bm}

\def\eqref#1{equation~\ref{#1}}
\def\1{\bm{1}}

\def\vzero{{\bm{0}}}

\def\ve{{\bm{e}}}

\def\vq{{\bm{q}}}

\def\vv{{\bm{v}}}

\def\mI{{\bm{I}}}

\def\mP{{\bm{P}}}

\def\mZ{{\bm{Z}}}

\DeclareMathAlphabet{\mathsfit}{\encodingdefault}{\sfdefault}{m}{sl}
\SetMathAlphabet{\mathsfit}{bold}{\encodingdefault}{\sfdefault}{bx}{n}

\def\gG{{\mathcal{G}}}

\def\gV{{\mathcal{V}}}

\usepackage{microtype}
\usepackage{hyperref}
\usepackage{url}
\usepackage{booktabs}

\newtheorem{theorem}{Theorem}
\newtheorem{lemma}{Lemma}
\newtheorem{corollary}{Corollary}

\newenvironment{proof}[1][Proof]
{\par\noindent\textbf{#1.}\ }
{\hfill$\square$\par}
\usepackage{algorithm}
\usepackage{algpseudocode}
\usepackage{float}
\usepackage{caption}
\usepackage{graphicx}
\usepackage{amssymb}
\usepackage{wrapfig}
\usepackage{booktabs}
\usepackage[table]{xcolor}
\usepackage{lineno}
\usepackage{multirow}
\definecolor{mazebest}{HTML}{E3F1E5}
\definecolor{mazemax}{HTML}{FFF2CB}

\usepackage{xcolor}

\definecolor{darkblue}{rgb}{0, 0, 0.5}
\hypersetup{colorlinks=true, citecolor=darkblue, linkcolor=darkblue, urlcolor=darkblue}
\usepackage{xurl}
\DeclareRobustCommand{\capmark}[2]{%
  \begingroup
  \setlength{\fboxsep}{0pt}%
  \smash{\colorbox{#1}{#2}}\vphantom{#2}%
  \endgroup
}

\title{Efficient Reasoning with Flow Language\\Models}

\author{%
    Hanru Bai\textsuperscript{\textnormal{1,2}},
    Faissal Izermine\textsuperscript{\textnormal{1,3}},
    Oscar Davis\textsuperscript{\textnormal{4}},
    T. Konstantin Rusch\textsuperscript{\textnormal{1,3,5,6}\thanks{\scriptsize Correspondence to \href{mailto:tkrusch@tue.ellis.eu}{\texttt{tkrusch@tue.ellis.eu}}}~}\\
    {\small \textsuperscript{1}Max Planck Institute for Intelligent Systems,}
    {\small \textsuperscript{2}ETH Zurich,}
    {\small \textsuperscript{3}ELLIS Institute Tübingen,} \\
    {\small \textsuperscript{4}University of Oxford,} 
    {\small \textsuperscript{5}Tübingen AI Center,}
    {\small \textsuperscript{6}Liquid AI}
}

\usepackage{tcolorbox}
\tcbuselibrary{breakable}

\usepackage{iclrtheoremhighlight}
\iclrfinalcopy % Uncomment for camera-ready version, but NOT for submission.
\begin{document}

\maketitle

% \begin{abstract}
% We study the reasoning efficiency of continuous Flow Language Models (FLMs)
% relative to discrete diffusion language models. Unlike discrete diffusion,
% which progressively commits positions to hard tokens, FLM maintains a
% continuous state and discretizes only after denoising. We theoretically show
% that this continuous state can represent geometric superpositions of candidate sequences and accumulate their soft evidence along the FLM trajectories. Experiments on graph planning and Sudoku show that, under small denoising-evaluation budgets, FLM achieves higher reasoning accuracy than matched discrete diffusion baselines. Intermediate-state interventions further provide evidence that information beyond the top candidate contributes to subsequent
% refinement. These results suggest that continuous states provide a useful
% representation for efficient reasoning with limited denoising evaluations.
% \end{abstract}

\begin{abstract}
% Diffusion language models can operate in discrete or continuous state spaces; yet, how this choice shapes their reasoning capabilities remains poorly understood.
Flow Language Models (FLMs) have emerged as a continuous-state alternative to discrete diffusion language models, yet the role of their continuous representations in reasoning remains unclear.
We investigate this question by comparing the reasoning efficiency of FLMs and discrete diffusion models, measured by solution accuracy under matched denoising steps. Unlike discrete diffusion, which passes categorical states between denoising steps, FLMs evolve a continuous sequence representation throughout denoising and decodes it into discrete tokens only at the end.
Our theoretical analysis shows, from a superposition perspective, how information retained in these continuous states can benefit reasoning. Intermediate-state interventions provide further empirical support for this theoretical account, showing that removing information about alternative candidates reduces subsequent solution recovery. Together, these findings show that FLMs allow evidence for multiple candidates to persist and inform subsequent reasoning before a discrete answer is produced. 
Furthermore, our experiments on maze planning and Sudoku tasks show that FLMs achieve greater reasoning efficiency in the few-step regime: FLMs achieves higher sequence accuracy than discrete diffusion baselines at matched model sizes and small denoising steps. 
On maze planning tasks, FLMs can also achieve comparable accuracy with smaller models. For example, on Maze15, FLM reaches the 95\% accuracy target at 64 denoising steps with 36.5\% fewer parameters than MDLM. These findings point to continuous state spaces as a promising foundation for reasoning models that require fewer refinement steps.

\end{abstract}

% \TODO{Maybe here "but how the choice between continuous and discrete state spaces affects reasoning remains less understood. " the best is to start by saying that choosing "continuous over discrete remains unclear" since people were working a lot on discrete diffusion to make it work for reasoning and thus we are not the first to do it?}

\section{Introduction}
\label{sec:introduction}

Reasoning is essential for many real-world decision-making problems. It requires models to integrate partial information and refine intermediate computational
states before reaching a valid solution
\citep{xu2025toward,patil2025advancing}. The reasoning capabilities of autoregressive models have been extensively studied, and their capabilities largely extended through chain-of-thought
prompting \citep{wei2022chain} and more recent continuous chain-of-thought variants
\citep{hao2024training,zhu2026reasoning}.
% These methods spend additional computational steps on intermediate thoughts, but also organize the reasoning process as a sequential chain. Prior work has shown that such chain-style reasoning can be limiting for tasks requiring exploration, planning, or multiple candidate solution paths \citep{yao2023tree}. This motivates growing interest in alternative generation paradigms for reasoning \citep{ye2025beyond}. 
These methods allocate additional computation to a sequential chain of intermediate thoughts. While effective on many tasks, this sequential formulation alone may lack the flexibility needed for exploration, planning, or comparing multiple candidate solution paths \citep{yao2023tree}. This motivates the study of alternative generation paradigms for reasoning.

% Among these alternatives, diffusion language models \citep{austin2021structured,lou2023discrete,sahoo2024simple} provide a alternative modeling paradigm for reasoning. Rather than committing to a left-to-right sequence of intermediate decisions, they refine a sequence-level state over multiple denoising steps. At each step, bidirectional attention allows every position to interact with the current global state, enabling information to be revised and propagated across the sequence \citep{chen2026bifocal}. The number of denoising steps therefore is an additional computational dimension along which partial constraints and candidate solutions can be progressively coordinated \citep{zhu2025surveylatentreasoning,chen2025reasoning}. A central question, however, is what information these intermediate states preserve, which depends critically on the state space in which denoising is performed.
Among these alternatives, diffusion language models provide a framework for iterative sequence generation \citep{austin2021structured,lou2023discrete,sahoo2024simple}. They refine a sequence over multiple denoising steps without requiring a fixed left-to-right generation order. The denoising process allows multiple token positions to be updated together, with some formulations also permitting previously generated tokens to be revised in subsequent steps. This ability to jointly predict undecoded tokens can also benefit reasoning tasks that require planning and global coherence \citep{he2026reasoning}.

% Since language is typically represented by discrete tokens, most diffusion language models operate directly on a discrete state space. Representative models such as D3PM \citep{austin2021structured}, SEDD \citep{lou2023discrete}, MDLM \citep{sahoo2024simple}, LLaDA \citep{nie2026large}, and Dream \citep{ye2025dream} have made discrete diffusion increasingly competitive through improved training objectives and sampling strategies. This imposes an abrupt loss of information at every integration step: the denoising model produces an output distribution, from which is only retained a single discrete token per position. Relative evidence among non-selected candidates is therefore not explicitly carried into subsequent computation. Prematurely instantiated tokens may also influence updates at other positions through bidirectional attention, making refinement sensitive to early local decisions. 
% It therefore remains unclear whether discrete states provide a sufficiently expressive medium for iterative reasoning.
Since language is typically represented by discrete tokens, many diffusion language models perform denoising directly in a discrete state space \citep{austin2021structured,lou2023discrete,sahoo2024simple,ye2025dream}. In standard discrete sampling, the denoiser uses interactions across positions to predict token probabilities, but the state passed to the next step contains only a single categorical value per position, which may be a token or a mask. The probabilities assigned to alternative tokens are therefore not explicitly preserved in this state. Early token choices may also influence subsequent predictions at other positions through bidirectional attention, making refinement sensitive to local decisions made before sufficient information has been integrated. It therefore remains unclear whether discrete states provide a sufficiently expressive medium for iterative reasoning.

% Motivated by latent reasoning \citep{hao2024training}, we ask whether reasoning can instead unfold in an abstract continuous space without committing to a single categorical state at each denoising step. Flow Language Models (FLMs) \citep{lee2026flow,roosCategoricalFlowMaps2026,potaptchik2026discreteflowmaps} instantiate this idea by evolving a continuous sequence representation throughout denoising and decoding discrete tokens only at the end. Their intermediate states can retain soft evidence over multiple candidate tokens and partial solutions, allowing subsequent computation to refine them before commitment.

% Although recent methods such as ELF \citep{hu2026elf}, LangFlow \citep{chen2026langflow}, and Plaid \citep{plaid,yang2026continuous} have demonstrated the promise of continuous diffusion and flow matching for language modeling, their reasoning efficiency remains largely unexplored. Our goal is not to evaluate general language modeling ability. Instead, we isolate reasoning capability on structured, verifiable tasks and measure reasoning efficiency by the solution accuracy achieved under a fixed number of denoisng steps. Under matched inference budgets, we compare continuous FLMs with discrete diffusion models and show, both theoretically and empirically, that continuous states provide a better accuracy-compute trade-off, particularly when only few denoising steps are used. Our main contributions are the following.
Motivated by latent reasoning \citep{hao2024training}, we ask whether reasoning can instead unfold in a continuous space without committing to a categorical state at each denoising step. Flow Language Models (FLMs) \citep{lee2026flow,roosCategoricalFlowMaps2026,potaptchik2026discreteflowmaps} realize this idea by evolving continuous sequence representations throughout denoising and decoding discrete tokens only at the end. Their intermediate states can retain soft representations of multiple candidate tokens, allowing subsequent computation to refine these alternatives before producing a discrete answer.

Recent work has developed continuous diffusion and flow-matching methods for language modeling \citep{hu2026elf,chen2026langflow}, but their reasoning efficiency remains less understood. Our goal is to compare the reasoning efficiency of continuous FLMs with discrete diffusion models, measured by solution accuracy under a fixed budget of denoiser evaluations. To isolate reasoning capability, we focus on structured, verifiable tasks. Our results indicate that continuous FLMs can offer a better accuracy--computation trade-off than discrete diffusion models in the settings studied, particularly in the few-step regime. Our main contributions are the following.

%\paragraph{Contributions.}
\begin{itemize}
\item We systematically study the reasoning efficiency of continuous FLMs relative to discrete diffusion models, measured by accuracy under a fixed denoising step. Beyond final accuracy alone, we examine the accuracy -- computation trade-off throughout iterative refinement, emphasizing the few-step regime.

\item From a superposition perspective, we identify a theoretical mechanism by which continuous states preserve useful intermediate information and enable its use in subsequent denoising steps. Candidate-retention measurements, controlled interventions, and decoder probes provide empirical support for this mechanism.

\item Experimentally, we demonstrate that FLMs achieve stronger few-step reasoning efficiency than discrete diffusion baselines across both maze planning and Sudoku, attaining higher sequence accuracy under matched model sizes and small denoising steps. On the maze benchmarks specifically, the advantage of FLMs over MDLM becomes more pronounced as the task difficulty increases, and FLMs reach matched accuracy targets with smaller models.

\end{itemize}

\section{Problem Setup}
\label{sec:problem-setup}

% We use graph planning as a representative task to introduce our reasoning
% formulation, as it requires combining local relational information into a
% globally valid solution \citep{chai2025graphllm}. 
% In this section, we first
% describe the tokenized representation and then the conditional flow-matching model.
% The framework is not restricted to graph planning and is also evaluated on
% Sudoku.

% \subsection{Graph Planning}
% \label{subsec:graph-planning}

% Let $\gG=(\gV,\gE)$ be a graph and let $\vq=(r,\tau)$ specify a source
% $r\in\gV$ and a target $\tau\in\gV$. Given $(\gG,\vq)$, the task is to predict
% a path
% $
% Y=(y_0,\ldots,y_L)
% \label{eq:path}
% $
% such thattg05t-`j
% \begin{equation}
% y_0=r,\qquad y_L=\tau,\qquad
% (y_i,y_{i+1})\in\gE
% \quad\text{for }i=0,\ldots,L-1.
% \label{eq:valid-path}
% \end{equation}

\paragraph{Tokenized sequence representation.}
\label{subsec:tokenized-representation}
% The graph and query are serialized into a conditioning sequence
% $\mathbf c=(c_1,\ldots,c_{M_c})$, while the target path is serialized into an
% answer sequence $\mathbf x=(x_1,\ldots,x_{M_x})$. Let
% $\mathcal V_{\mathrm{tok}}$ denote the tokenizer vocabulary,
% $K=|\mathcal V_{\mathrm{tok}}|$, and $\ve_a\in\mathbb R^K$ denote the one-hot
% vector for token $a$. For a sequence $\mathbf s=(s_1,\ldots,s_m)$, define
% $
% \operatorname{OH}(\mathbf s)
% =
% [\ve_{s_1},\ldots,\ve_{s_m}]^\top
% \in\mathbb R^{m\times K}.
% $
% The conditioning and clean answer states are
% \begin{equation}
% \mathbf C_{\gG,\vq}
% =
% \operatorname{OH}(\mathbf c)
% \in\mathbb R^{M_c\times K},
% \qquad
% \mZ_1
% =
% \operatorname{OH}(\mathbf x)
% \in\mathbb R^{M_x\times K}.
% \label{eq:tokenized-sequence-states}
% \end{equation}
Let $\mathcal V_{\mathrm{tok}}$ denote the tokenizer vocabulary and
$K=|\mathcal V_{\mathrm{tok}}|$. Each conditioning input and target output are
serialized and tokenized into sequences
$\mathbf c=(c_1,\ldots,c_{M_c})$ and
$\mathbf x=(x_1,\ldots,x_{M_x})$    . Let
$\ve_a\in\mathbb R^K$ denote the one-hot vector associated with token
$a\in\mathcal V_{\mathrm{tok}}$. For a token sequence
$\mathbf s=(s_1,\ldots,s_m)$, we write the one-hot representation of $\mathbf s$ as
$
\operatorname{OH}(\mathbf s)
=
[\ve_{s_1},\ldots,\ve_{s_m}]^\top
\in\mathbb R^{m\times K}.
$
The conditioning and clean target states are
\begin{equation}
\textstyle
\mathbf C
=
\operatorname{OH}(\mathbf c)
\in\mathbb R^{M_c\times K},
\qquad
\mZ_1
=
\operatorname{OH}(\mathbf x)
\in\mathbb R^{M_x\times K}.
\label{eq:tokenized-sequence-states}
\end{equation}

\paragraph{Conditional flow matching.}
\label{subsec:conditional-flow-matching}
FLM keeps $\mathbf C_{\gG,\vq}$ fixed and evolves a continuous answer state
$\mZ_t\in\mathbb R^{M_x\times K}$. During training, we sample
$\mZ_0\sim\mathcal N(\vzero,\mI)$ and construct the linear interpolation
\begin{equation}
\textstyle
\mZ_t=(1-t)\mZ_0+t\mZ_1,
\qquad
t\sim\mathcal{U}[0,1].
\label{eq:fm-interpolation}
\end{equation}
Given $(\mZ_t,t,\mathbf C_{\gG,\vq})$, the model predicts a distribution over
the tokenizer vocabulary at each answer position:
$
\mP_\theta(\mZ_t,t,\mathbf C_{\gG,\vq})
\in(\Delta^{K-1})^{M_x}.
\label{eq:clean-token-distribution}
$
We train the model to recover clean answer tokens using
\begin{equation}
\textstyle
\mathcal L_{\mathrm{FM}}(\theta)
=
\mathbb E
\left[
-\sum_{i=1}^{M_x}
\log
\mP_\theta(\mZ_t,t,\mathbf C_{\gG,\vq})_{i,x_i}
\right].
\label{eq:fm-token-ce-objective}
\end{equation}

At inference time, we initialize $\mZ_0\sim\mathcal N(\vzero,\mI)$ and
numerically integrate
\begin{equation}
\textstyle
\frac{d\mZ_t}{dt}
=
\frac{
\mP_\theta(\mZ_t,t,\mathbf C_{\gG,\vq})-\mZ_t
}{1-t},
\qquad 0\leq t<1.
\label{eq:conditional-token-flow}
\end{equation}
Using $N$ integration steps requires $N$ forward passes. After the
final update, the answer is discretized once as
$
\widehat x_i
=
\operatorname*{arg\,max}_{a\in\mathcal V_{\mathrm{tok}}}
(\mZ_1)_{i,a}$,
for all $
 i=1,\ldots,M_x.
\label{eq:decode-answer-token}
$
The decoded token sequence is then detokenized and parsed into the predicted
path, $\widehat Y$.

\section{Theoretical Analysis}
\label{sec:theory}

% Our analysis provides a representational perspective on why FLMs can achieve higher reasoning accuracy under small denoising-evaluation budgets than discrete diffusion models. In particular, we characterize the ability of a persistent continuous state to retain information associated with multiple
% candidate sequences. 
We begin by developing theoretically a representational account of how continuous FLM states preserve and use information not captured by their hard-token projections. Section~\ref{sec:theory-candidate-superposition} shows that FLM denoising accumulates a temporal superposition of candidate sequences. Section~\ref{sec:theory-historical-influence} characterizes how retained non-top-1 historical support can affect model computation and clean-token margins without changing the current hard state. Finally, Section~\ref{sec:theory-task-information} quantifies the task-relevant predictive information available beyond hard projection through decoder-risk bounds.

\subsection{Temporal Candidate-Sequence Superposition}
\label{sec:theory-candidate-superposition}
Let $\mathcal S_{M_x}=\mathcal V_{\mathrm{tok}}^{M_x}$ denote the set of candidate token sequences. Each $\mathbf s=(s_1,\ldots,s_{M_x})\in\mathcal S_{M_x}$ corresponds to the one-hot vertex
$
\mathbf E^{(\mathbf s)}
=\operatorname{OH}(\mathbf s)$,
$
\mathcal H = \left\{
\mathbf E^{(\mathbf s)}:
\mathbf s\in\mathcal S_{M_x}
\right\},
$
where $\mathcal H$ is the set of complete hard-token states. At time $t$, let
$
\mP_t
=
\mP_\theta(\mZ_t,t,\mathbf C_{\gG,\vq})
$
be the position-wise clean-token prediction and define
$
\pi_{\mathbf s}(t)
=
\prod_{j=1}^{M_x}(\mP_t)_{j,s_j}.
\label{eq:instantaneous-candidate-weight}
$
These weights form a probability distribution and yield the exact
decomposition
$
\mP_t
=
\sum_{\mathbf s\in\mathcal S_{M_x}}
\pi_{\mathbf s}(t)\mathbf E^{(\mathbf s)}.
\label{eq:instantaneous-candidate-decomposition}
$
Thus, each instantaneous clean prediction is a convex superposition of
candidate-sequence vertices.

\begin{highlight}
\begin{theorem}[Candidate-sequence superposition in FLM]
\label{thm:flm-candidate-superposition}
Suppose that the FLM state follows
\begin{equation}
\textstyle
\frac{d\mZ_t}{dt}
=
\frac{\mP_t-\mZ_t}{1-t},
\qquad 0\leq t<1,
\label{eq:superposition-flm-ode}
\end{equation}
and that $\mP_t$ is integrable along the trajectory. Then, for every
$0<t<1$,
\begin{equation}
\textstyle
\mZ_t
=
(1-t)\mZ_0+t\,\overline{\mathbf E}_t,
\qquad
\overline{\mathbf E}_t
=
\sum_{\mathbf s\in\mathcal S_{M_x}}
\overline{\pi}_{\mathbf s}(t)\mathbf E^{(\mathbf s)},
\label{eq:normalized-candidate-superposition}
\end{equation}
where
\begin{equation}
\textstyle
\overline{\pi}_{\mathbf s}(t)
=
\frac{1-t}{t}
\int_0^t
\frac{\pi_{\mathbf s}(u)}{(1-u)^2}\,du,
\qquad
\overline{\pi}_{\mathbf s}(t)\geq0,
\qquad
\sum_{\mathbf s}\overline{\pi}_{\mathbf s}(t)=1.
\label{eq:integrated-candidate-weight}
\end{equation}
Therefore, apart from the residual initial noise $(1-t)\mZ_0$, the FLM state
contains a temporally accumulated convex superposition of candidate-sequence
vertices. If two distinct sequences have positive integrated weights, then
$\overline{\mathbf E}_t\notin\mathcal H$; hence, this denoised component cannot
be represented by any single complete hard-token state. \highlightend
\end{theorem}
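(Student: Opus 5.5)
The plan is to solve the linear ODE~\eqref{eq:superposition-flm-ode} explicitly by variation of constants, and then substitute the instantaneous decomposition $\mP_u=\sum_{\mathbf s}\pi_{\mathbf s}(u)\mathbf E^{(\mathbf s)}$ inside the integral.

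\textbf{Step 1: integrating factor.} For $0\le t<1$, set $\mathbf W_t=\mZ_t/(1-t)$. Differentiating gives
\[
\frac{d\mathbf W_t}{dt}=\frac{1}{1-t}\frac{d\mZ_t}{dt}+\frac{\mZ_t}{(1-t)^2}=\frac{\mP_t}{(1-t)^2}.
\]
Integrability of $\mP_t$ on $[0,t]$ holds by assumption, and $(1-u)^{-2}$ is bounded there because $t<1$. Hence
\[
\mZ_t=(1-t)\mZ_0+(1-t)\int_0^t\frac{\mP_u}{(1-u)^2}\,du .
\]

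\textbf{Step 2: insert the superposition.} Replace $\mP_u$ by $\sum_{\mathbf s}\pi_{\mathbf s}(u)\mathbf E^{(\mathbf s)}$. The sum over $\mathcal S_{M_x}$ is finite, so it can be exchanged with the integral. Factoring out $t$ gives $\mZ_t=(1-t)\mZ_0+t\,\overline{\mathbf E}_t$, with $\overline{\pi}_{\mathbf s}(t)$ exactly as in~\eqref{eq:integrated-candidate-weight}. Each $\pi_{\mathbf s}(u)$ is measurable and bounded, being a finite product of entries of $\mP_u$ that lie in $[0,1]$. So each integral is well defined.

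\textbf{Step 3: the weights are a probability distribution.} Nonnegativity follows because $\pi_{\mathbf s}\ge0$. For normalization, use $\sum_{\mathbf s}\pi_{\mathbf s}(u)=\prod_j\sum_a(\mP_u)_{j,a}=1$. Then
\[
\sum_{\mathbf s}\overline{\pi}_{\mathbf s}(t)=\frac{1-t}{t}\int_0^t(1-u)^{-2}\,du=\frac{1-t}{t}\Big(\frac{1}{1-t}-1\Big)=1 .
\]

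\textbf{Step 4: non-membership in $\mathcal H$.} This is the only step needing a small argument beyond calculus. Suppose $\mathbf s\neq\mathbf s'$ both have positive weight. Choose a position $j$ with $s_j\ne s'_j$. Row $j$ of $\overline{\mathbf E}_t$ is $\sum_a w_a\ve_a$, where $w_a=\sum_{\mathbf r:\,r_j=a}\overline{\pi}_{\mathbf r}(t)$. Both $w_{s_j}$ and $w_{s'_j}$ are positive, and $\sum_a w_a=1$. Therefore every entry of row $j$ is strictly below $1$, so the row is not a one-hot vector. Every element of $\mathcal H$ has all rows one-hot, hence $\overline{\mathbf E}_t\notin\mathcal H$.

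The main obstacle is Step 4. There we must rule out cancellation: a mixture of distinct vertices could in principle coincide with a vertex. Nonnegativity of the weights excludes this, which is why the row-wise marginal argument is used. The remaining steps are routine, apart from checking that the integrability hypothesis suffices for the product-rule computation in the absolutely continuous (Carathéodory) sense.
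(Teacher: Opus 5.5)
Your proposal is correct and follows the paper's proof essentially step for step: the integrating factor $1/(1-t)$ gives $\mZ_t=(1-t)\mZ_0+(1-t)\int_0^t \mP_u(1-u)^{-2}\,du$, the instantaneous product-weight decomposition of $\mP_u$ is substituted with the finite sum exchanged with the integral, and normalization follows from $(1-t)\int_0^t(1-u)^{-2}\,du=t$. Your row-marginal argument for $\overline{\mathbf E}_t\notin\mathcal H$ matches the paper's, which notes that row $j$ has two positive entries, $(\overline{\mathbf E}_t)_{j,s_j}\geq\overline{\pi}_{\mathbf s}(t)$ and $(\overline{\mathbf E}_t)_{j,s'_j}\geq\overline{\pi}_{\mathbf s'}(t)$, and therefore cannot be one-hot.
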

\end{highlight}
Theorem~\ref{thm:flm-candidate-superposition} shows that the FLM state preserves historical support for non-top-1 candidates through a discounted superposition of past token distributions. Our intermediate-state intervention experiments further suggest that FLM
uses those superposed candidate information for subsequent refinement during
reasoning (Section~\ref{sec:superposition-analysis}). 
We next characterize how historical candidate information retained by superposition can affect clean-token predictions
without changing the current argmax. 
% The subsequent corollary tracks these effects through the Euler update. 
Discrete solvers retaining only the current hard-token sequence
cannot distinguish such histories at fixed time.

\subsection{Historical Influence Within a Fixed Hard State}
\label{sec:theory-historical-influence}
We compare the original state with one in which a non-top-1
candidate's historical contribution is removed.
Fix the context, time \(t\in(0,1)\), parameters, and attention mask.
Let \(\Pi\) denote rowwise argmax with fixed tie-breaking, and
\(\mathbf x=\Pi(\mZ_t)\) the current hard-token sequence.
For a strict non-top-1 candidate \(v\) at free position \(j\), define
\[
\textstyle
h_t=t(\overline{\mathbf E}_t)_{j,v}>0,\qquad
\mathbf U_t(\lambda)=\mZ_t-(h_t-\lambda)\mathbf D^{jv},
\qquad
\mathbf D^{jv}=\mathbf e_j\mathbf e_v^\top,
\]
where \(\lambda\in[0,h_t]\) is the retained historical support.
Let \(\mP_t^{(\lambda)}\) be the full prediction recomputed on
\(\mathbf U_t(\lambda)\), and write
\(M_t^{(\lambda)}
=(\mP_t^{(\lambda)})_{j,v}-(\mP_t^{(\lambda)})_{j,x_j}\),
with \(M_t=M_t^{(h_t)}\).

For first-layer scaled dot-product attention, denote the queries,
keys, values, and softmax weights by
\(\mathbf q_i^b,\mathbf k_m^b,\mathbf v_m^b,a_{im}^b\), respectively,
with \(\mathbf o_i^b=\sum_m a_{im}^b\mathbf v_m^b\).
Write \(M_t^{(\lambda)}=F_t(O(\lambda),\xi(\lambda))\):
\(O\) collects all head outputs at all positions, \(\xi\) collects
bypass inputs, and \(F_t\) includes all subsequent computation.
With overdots denoting derivatives with respect to \(\lambda\), set
\[
\begin{aligned}
\textstyle
\mathbf g_i^b=\nabla_{\mathbf o_i^b}F_t, \quad
 \eta_{im}^b=\langle\mathbf g_i^b,\mathbf v_m^b\rangle, \quad
\rho_{im}^{Q,b} 
=\frac{\langle\dot{\mathbf q}_i^b,\mathbf k_m^b\rangle}{\sqrt{d_b}}, \quad
\rho_{im}^{K,b} =\frac{\langle\mathbf q_i^b,\dot{\mathbf k}_m^b\rangle}{\sqrt{d_b}},
\end{aligned}
\]
where \(d_b\) is the head dimension.
All quantities depend on \(\lambda\); expectations and covariances
below use the attention distribution \(m\sim a_i^b\) over visible
positions.

\begin{highlight}
\begin{theorem}[Historical influence within a fixed hard state]
\label{thm:history-mechanism-full-influence}
\label{thm:full-history-attention}
\label{thm:aligned-history-attention}
% Assume deterministic evaluation and \(C^1\) model maps near the
% intervention path. 
Assume deterministic evaluation and that the query, key, value,
and bypass maps, together with the downstream map \(F_t\), are
continuously differentiable on open sets containing their
respective inputs along the intervention path.
For every \(\lambda\in[0,h_t]\),
\begin{equation}
\textstyle
\Pi(\mathbf U_t(\lambda))=\mathbf x,\;
\partial_\lambda M_t^{(\lambda)}
=\mathcal C_Q+\mathcal C_K+\mathcal C_V+\mathcal C_R
=:\mathcal C(\lambda), \text{where}
\label{eq:full-history-hard-preservation}
\end{equation}
\begin{equation}
\begin{aligned}
\textstyle
\mathcal C_X
=\sum_{b,i}\operatorname{Cov}_{m\sim a_i^b}
  (\rho_{im}^{X,b},\eta_{im}^b),
  \; X\in\{Q,K\}; \;
\mathcal C_V
=\sum_{b,i}\mathbb E_{m\sim a_i^b}
  \langle\mathbf g_i^b,\dot{\mathbf v}_m^b\rangle,
\;
\mathcal C_R=\langle\nabla_\xi F_t,\dot\xi\rangle.
\end{aligned}
\label{eq:full-history-path-contributions}
\end{equation}
If the maps preceding attention act independently at each position,
then for \(i\neq j\) with \(j\) visible,
\begin{equation}
\textstyle
\dot{\mathbf o}_i^b
=a_{ij}^b(\mathbf v_j^b-\mathbf o_i^b)\rho_{ij}^{K,b}
+a_{ij}^b\dot{\mathbf v}_j^b.
\label{eq:history-cross-position}
\end{equation}
Wherever \(\mathcal C(\lambda)\neq0\), sufficiently small admissible
nonzero perturbations change the margin without changing \(\mathbf x\).
Moreover,
\begin{equation}
\textstyle
\Delta_t:=M_t-M_t^{(0)}
=\int_0^{h_t}\mathcal C(\lambda)\,d\lambda.
\label{eq:full-history-integrated-effect}
\end{equation}

Let \(\Omega_t=\max_\lambda M_t^{(\lambda)}
-\min_\lambda M_t^{(\lambda)}\) over this path.  \(\Omega_t>0\) if and only if
\(\mathcal C\not\equiv0\). Every scalar readout \(H\) using only context, time, and the hard
state satisfies
\begin{equation}
\textstyle
\sup_{\lambda\in[0,h_t]}
\left|M_t^{(\lambda)}
-H(\mathbf C_{\gG,\vq},t,\Pi(\mathbf U_t(\lambda)))\right|
\geq\frac{\Omega_t}{2}\geq\frac{|\Delta_t|}{2}.
\label{eq:history-mechanism-hard-error}
\end{equation} 
The bound is tight.
\highlightend
% The first bound is tight, and \(\Omega_t>0\) if and only if
% \(\mathcal C\not\equiv0\).
\end{theorem}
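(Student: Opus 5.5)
The proof splits into four parts: hard-state preservation, the chain-rule decomposition of $\partial_\lambda M_t^{(\lambda)}$, the cross-position formula, and the readout lower bound.

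\textbf{Hard-state preservation.} Along the path, $\mathbf U_t(\lambda)$ differs from $\mZ_t$ only in entry $(j,v)$, and there it lowers the value by $h_t-\lambda\ge 0$. Since $v$ is a strict non-top-1 candidate at position $j$, we have $(\mZ_t)_{j,v}<(\mZ_t)_{j,x_j}$, and lowering $(j,v)$ keeps this inequality strict. No other entry of row $j$ changes, and all other rows are untouched. Hence $\Pi(\mathbf U_t(\lambda))=\mathbf x$ for every $\lambda\in[0,h_t]$, irrespective of tie-breaking. The path derivative is $\dot{\mathbf U}_t=\mathbf D^{jv}$, a constant matrix.

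\textbf{Chain-rule decomposition.} By the $C^1$ assumptions, $\lambda\mapsto M_t^{(\lambda)}=F_t(O(\lambda),\xi(\lambda))$ is $C^1$. The chain rule gives
\[
\partial_\lambda M=\sum_{b,i}\langle \mathbf g_i^b,\dot{\mathbf o}_i^b\rangle+\langle\nabla_\xi F_t,\dot\xi\rangle .
\]
The second term is $\mathcal C_R$. For the first, differentiate $\mathbf o_i^b=\sum_m a_{im}^b\mathbf v_m^b$. Writing $s_{im}=\langle\mathbf q_i,\mathbf k_m\rangle/\sqrt{d_b}$, the softmax derivative is $\dot a_{im}=a_{im}(\dot s_{im}-\mathbb E_{a_i}\dot s_{i\cdot})$, with $\dot s_{im}=\rho^Q_{im}+\rho^K_{im}$. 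Therefore
\[
\langle\mathbf g_i,\dot{\mathbf o}_i\rangle=\sum_m a_{im}(\dot s_{im}-\mathbb E\dot s)\eta_{im}+\mathbb E_{a_i}\langle\mathbf g_i,\dot{\mathbf v}_m\rangle .
\]
The first sum equals $\operatorname{Cov}_{a_i}(\dot s,\eta)$, which splits linearly into the $Q$ and $K$ covariances. Summing over $b,i$ yields $\mathcal C_Q+\mathcal C_K+\mathcal C_V+\mathcal C_R$.

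\textbf{Cross-position formula.} If the pre-attention maps act position-wise, only position $j$'s input changes. Then for $m\neq j$ we have $\dot{\mathbf k}_m=\dot{\mathbf v}_m=0$, and for $i\neq j$ also $\dot{\mathbf q}_i=0$. Thus $\dot s_{im}=\rho^K_{ij}\delta_{mj}$. Substituting into the formula above gives
\[
\dot{\mathbf o}_i=\sum_m a_{im}(\delta_{mj}-a_{ij})\rho^K_{ij}\mathbf v_m+a_{ij}\dot{\mathbf v}_j=a_{ij}\rho^K_{ij}(\mathbf v_j-\mathbf o_i)+a_{ij}\dot{\mathbf v}_j .
\]

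\textbf{Local sensitivity and integrated effect.} If $\mathcal C(\lambda_0)\ne 0$, then a first-order Taylor expansion shows that $M$ changes under small admissible moves of $\lambda$, while $\mathbf x$ stays fixed by the hard-state step. The fundamental theorem of calculus, applied to the $C^1$ function $\lambda\mapsto M_t^{(\lambda)}$, gives $\Delta_t=\int_0^{h_t}\mathcal C$.

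\textbf{Oscillation.} $M$ is continuous on a compact interval, so $\Omega_t$ is attained. If $\mathcal C\equiv0$, then $M$ is constant and $\Omega_t=0$. Conversely, if $\Omega_t=0$, then $M$ is constant and its derivative $\mathcal C$ vanishes identically.

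\textbf{Readout bound.} Any readout $H$ depending only on context, time and the hard state is a single constant $c$ along the path, since the hard state is fixed. Let $\lambda_\pm$ attain the maximum and minimum of $M$. Then
\[
\Omega_t=|M(\lambda_+)-M(\lambda_-)|\le |M(\lambda_+)-c|+|c-M(\lambda_-)|\le 2\sup_\lambda|M-c| .
\]
Also $|\Delta_t|=|M(h_t)-M(0)|\le\Omega_t$. For tightness, take $c$ equal to the midpoint $(\max M+\min M)/2$, which attains $\Omega_t/2$. When the extrema are the endpoints, for instance if $M$ is monotone, the bound $|\Delta_t|/2$ is attained as well.

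The main obstacle is bookkeeping in the decomposition step. One must check that the softmax Jacobian yields exactly a centered covariance. One must also make sure that $\xi$ captures all non-attention dependence on $\lambda$, such as residual and bypass paths, so that no derivative term is double-counted between $\mathcal C_V$ and $\mathcal C_R$.
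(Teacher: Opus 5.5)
Your proposal is correct and follows the paper's proof essentially step for step. The shared steps are: hard-state preservation via the unchanged set of row maximizers, the chain rule through the softmax Jacobian giving the centered covariances, the position-wise computation for the cross-position formula, the fundamental theorem of calculus for $\Delta_t$, and the triangle-inequality readout bound with the midpoint constant for tightness. The only difference is cosmetic: you obtain local sensitivity from a first-order Taylor expansion rather than the paper's sign-preserving-neighborhood integral estimate, which is equally valid and needs only differentiability.
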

\end{highlight}

Equation~\ref{eq:full-history-path-contributions} separates attention
reweighting through Q/K, message changes through V, and bypass
responses; their signed effects may reinforce or cancel.
Equation~\ref{eq:history-cross-position} makes the influence on other
positions explicit, while Equation~\ref{eq:history-mechanism-hard-error}
bounds the margin variation that a readout of the hard state cannot reproduce.   We experimentally test this mechanism by removing retained non-top-1 historical support while preserving the hard-token sequence and measuring the resulting
prediction-margin changes and Q/K/V/bypass contribution in Appendix~\ref{app:full-state-intervention}.

Appendix~\ref{app:history-refinement} further shows how these differences propagate through Euler updates and when the original trajectory strengthens a candidate's relative support without changing the hard-token sequence. We next examine whether these differences help predict the correct solution. The following theorem quantifies this distinction through bounds on prediction risk.

\subsection{Task-Relevant Information Beyond Hard Projection}
\label{sec:theory-task-information}
At fixed time $t$, we predict target-path membership
$Y=\mathbf 1\{X_J=v_{\mathrm{path}}\}$, where $X_J$ is the target
token at a uniformly sampled output position $J$.
Inputs $A_t,B_t$ share the context in a window $\mathcal N$ around
$J$, using respectively the continuous state and its rowwise one-hot
argmax with fixed tie-breaking.
Let $R_t^Z,R_t^{\mathrm{hard}}$ denote their unweighted population
BCE risks, $\Delta_t^{\mathrm{dec}}=R_t^{\mathrm{hard}}-R_t^Z$,
and $\varepsilon_t^Z,\varepsilon_t^{\mathrm{hard}}$ their excess
risks above the corresponding Bayes risks.
For $\mathbf W_t=\operatorname{softmax}((\mZ_t)_{\mathcal N})$,
define $A_t^{\mathrm{soft}}=(\mathbf C_{\mathcal N},\mathbf W_t)$
and $A_t^{\mathrm{conf}}=(B_t,\max\mathbf W_t)$,
with softmax and maxima taken rowwise.

\begin{highlight}
\begin{theorem}[Task-relevant information beyond hard projection]
\label{thm:flm-decoder-accessibility}
Assume finite decoder risks and let
\(b_t\geq
(\mathbb E\|(\mZ_t)_{J,:}-\mathbf e_{X_J}^{\top}\|_2^2)^{1/2}\).
Define the predictive signal beyond winner confidence by
\begin{equation}
\textstyle
\nu_t=\mathbb E\left[
\operatorname{Var}\left(
\Pr(Y=1\mid A_t^{\mathrm{soft}})
\mid A_t^{\mathrm{conf}}
\right)\right].
\label{eq:decoder-beyond-confidence}
\end{equation}
Then, with \(h_{\mathrm{bin}}\) denoting binary entropy in nats,
\begin{equation}
\textstyle
2\nu_t-\varepsilon_t^Z
\leq\Delta_t^{\mathrm{dec}}
\leq h_{\mathrm{bin}}\left(\min\{1/2,2b_t^2\}\right)
+\varepsilon_t^{\mathrm{hard}}.
\label{eq:decoder-main-bound}
\end{equation}
Also,
\(R_t^Z\leq
h_{\mathrm{bin}}(\min\{1/2,2b_t^2\})+\varepsilon_t^Z\). \highlightend
\end{theorem}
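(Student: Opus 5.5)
Write $R^\star(A)=\mathbb E[h_{\mathrm{bin}}(\Pr(Y=1\mid A))]$ for the Bayes BCE risk of an input $A$. Then $R_t^Z=R^\star(A_t)+\varepsilon_t^Z$ and $R_t^{\mathrm{hard}}=R^\star(B_t)+\varepsilon_t^{\mathrm{hard}}$, so
\[
\Delta_t^{\mathrm{dec}}=\bigl(R^\star(B_t)-R^\star(A_t)\bigr)+\varepsilon_t^{\mathrm{hard}}-\varepsilon_t^Z .
\]
Both sides of the main bound thus reduce to bounds on Bayes risks. I will use three facts: monotonicity of Bayes risk under coarsening of the input; the expansion of $h_{\mathrm{bin}}$, where conditional entropy drops by at least twice a conditional variance; and a Fano-type bound linking the state to $X_J$ through $b_t$.

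\textbf{Lower bound.} The input $A_t$ contains $(\mZ_t)_{\mathcal N}$ and $\mathbf C_{\mathcal N}$. The hard projection, the softmax $\mathbf W_t$ and $\max\mathbf W_t$ are all deterministic functions of these, so $\sigma(A_t^{\mathrm{conf}})\subseteq\sigma(A_t^{\mathrm{soft}})\subseteq\sigma(A_t)$ and $\sigma(B_t)\subseteq\sigma(A_t^{\mathrm{conf}})$. Hence
\[
R^\star(B_t)\geq R^\star(A_t^{\mathrm{conf}}) \quad\text{and}\quad R^\star(A_t^{\mathrm{soft}})\geq R^\star(A_t).
\]
Set $p=\Pr(Y=1\mid A_t^{\mathrm{soft}})$ and $q=\mathbb E[p\mid A_t^{\mathrm{conf}}]$. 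Since $h_{\mathrm{bin}}''(x)=-1/(x(1-x))\le -4$, $h_{\mathrm{bin}}$ is $4$-strongly concave, so $h_{\mathrm{bin}}(p)\le h_{\mathrm{bin}}(q)+h_{\mathrm{bin}}'(q)(p-q)-2(p-q)^2$. Take conditional expectations given $A_t^{\mathrm{conf}}$; the linear term vanishes. Taking outer expectations gives
\[
R^\star(A_t^{\mathrm{conf}})-R^\star(A_t^{\mathrm{soft}})\geq 2\nu_t .
\]
The $h_{\mathrm{bin}}'$ term is unbounded when $q\in\{0,1\}$, but then $p=q$ almost surely, so the step can be handled by a limiting argument or by writing the bound in its integral (Bregman) form. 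Chaining the inequalities gives $R^\star(B_t)-R^\star(A_t)\geq 2\nu_t$. Dropping $\varepsilon_t^{\mathrm{hard}}\ge0$ then yields $\Delta_t^{\mathrm{dec}}\geq 2\nu_t-\varepsilon_t^Z$.

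\textbf{Upper bound and the bound on $R_t^Z$.} It suffices to show $R^\star(B_t)-R^\star(A_t)\le R^\star(B_t)\le h_{\mathrm{bin}}(\min\{1/2,2b_t^2\})$, using $R^\star(A_t)\ge0$. The same bound applies to $R^\star(A_t)\le R^\star(B_t)$, which gives the final claim $R_t^Z\le h_{\mathrm{bin}}(\cdot)+\varepsilon_t^Z$. The key step is an error-probability bound. Consider the predictor $\hat Y=\mathbf 1\{\Pi((\mZ_t)_J)=v_{\mathrm{path}}\}$, which is measurable with respect to $B_t$; I assume $J$ is known to the decoder, being the window centre. If the row argmax differs from $X_J$, some coordinate $a\neq X_J$ satisfies $z_a\ge z_{X_J}$, and then
\[
\|z-\mathbf e_{X_J}\|_2^2\geq (1-z_{X_J})^2+z_a^2\geq \tfrac12 .
\]
By Markov's inequality, $\Pr(\hat Y\neq Y)\le\Pr(\Pi(z)\neq X_J)\le 2b_t^2$. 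A binary Fano inequality then gives $R^\star(B_t)=H(Y\mid B_t)\le h_{\mathrm{bin}}(P_e)$. Monotonicity of $h_{\mathrm{bin}}$ on $[0,1/2]$ gives $h_{\mathrm{bin}}(P_e)\le h_{\mathrm{bin}}(\min\{1/2,2b_t^2\})$. When $2b_t^2\ge1/2$ the bound is simply $\log 2$, which is trivial because the constant predictor $1/2$ achieves risk $\log 2$. The two subtleties are this case and the tie-breaking in the case $z_a=z_{X_J}$.

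\textbf{Main obstacle.} I expect the main difficulty to be bookkeeping rather than analysis. One must pin down the measurability chain: that $B_t$ and $A_t^{\mathrm{conf}}$ carry the same context $\mathbf C_{\mathcal N}$ as $A_t^{\mathrm{soft}}$, and that $A_t^{\mathrm{soft}}$ is a function of $A_t$. One must also interpret the Bayes BCE risk as conditional entropy, so that Fano and strong concavity apply directly. The Fano step is the part I would check most carefully.
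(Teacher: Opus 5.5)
Your proposal is correct and follows essentially the same route as the paper: you split $\Delta_t^{\mathrm{dec}}$ into a Bayes gap plus excess risks, bound the $2\nu_t$ term by strong concavity of $h_{\mathrm{bin}}$ over nested $\sigma$-algebras (this is the Bregman form of the paper's $\operatorname{KL}(\operatorname{Ber}(p)\,\|\,\operatorname{Ber}(q))\geq 2(p-q)^2$), and get the upper side from Markov's inequality plus the binary Fano/error-entropy bound. The one step to state explicitly is $\sigma(A_t^{\mathrm{conf}})\subseteq\sigma(A_t^{\mathrm{soft}})$, which does not follow from everything being a function of $A_t$: it holds because softmax is strictly order-preserving within each row, so $B_t$ is recoverable from $\mathbf W_t$ under the fixed tie-breaking rule, and you need it for the tower identity $q=\Pr(Y=1\mid A_t^{\mathrm{conf}})$ that makes $\mathbb E[h_{\mathrm{bin}}(q)]$ the Bayes risk of $A_t^{\mathrm{conf}}$.
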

\end{highlight}

The left-hand inequality in Equation~\ref{eq:decoder-main-bound} guarantees a predictive advantage when \(2\nu_t>\varepsilon_t^Z\); the right-hand inequality implies that the Bayes-risk gap vanishes as \(b_t\to0\).
We examine this retained information experimentally by using matched decoders on intermediate FLM states and their hard projections, measuring how well each representation supports prediction of the correct solution in Section~\ref{sec:superposition-analysis}.
All detailed proofs are provided in the Appendix~\ref{app:theorem proof}.

\begin{table}[t]
\centering
\caption{
\textbf{Sequence accuracy (\%) of FLM and baseline methods on Maze10,
Maze15, and Maze20 across denoising steps.}
+ Ent. and + Conf. denote entropy-based and confidence-based remasking,
respectively.
For each task and method, \capmark{mazebest}{green} marks the first
tested step reaching at least $95\%$;
\capmark{mazemax}{yellow} marks the first attaining its observed maximum.
}
\label{tab:maze-main}
\begingroup
\definecolor{mazebest}{HTML}{E3F1E5}
\definecolor{mazemax}{HTML}{FFF2CB}
\fontsize{9}{10.5}\selectfont
\setlength{\tabcolsep}{0pt}
\renewcommand{\arraystretch}{1.06}
\def\mazedual#1{\cellcolor{mazemax}%
  \makebox[\linewidth][l]{%
    \rlap{\color{mazebest}%
      \rule[-\arraystretch\dp\strutbox]{.5\linewidth}%
        {\dimexpr\arraystretch\ht\strutbox+\arraystretch\dp\strutbox\relax}}%
    \makebox[\linewidth][c]{\strut\textbf{#1}}}}
\begin{tabular}{@{}>{\raggedright\arraybackslash}m{\dimexpr0.088607595\textwidth-0.664556962pt\relax}>{\centering\arraybackslash}m{\dimexpr0.078481013\textwidth-0.588607595pt\relax}>{\centering\arraybackslash}m{\dimexpr0.058227848\textwidth-0.436708861pt\relax}@{\hspace{2.5pt}}>{\centering\arraybackslash}m{\dimexpr0.068354430\textwidth-0.512658228pt\relax}@{\hspace{2.5pt}}>{\centering\arraybackslash}m{\dimexpr0.073417722\textwidth-0.550632911pt\relax}>{\centering\arraybackslash}m{\dimexpr0.068354430\textwidth-0.512658228pt\relax}>{\centering\arraybackslash}m{\dimexpr0.081012658\textwidth-0.607594937pt\relax}>{\centering\arraybackslash}m{\dimexpr0.083544304\textwidth-0.626582278pt\relax}>{\centering\arraybackslash}m{\dimexpr0.078481013\textwidth-0.588607595pt\relax}>{\centering\arraybackslash}m{\dimexpr0.086075949\textwidth-0.645569620pt\relax}>{\centering\arraybackslash}m{\dimexpr0.086075949\textwidth-0.645569620pt\relax}>{\centering\arraybackslash}m{\dimexpr0.086075949\textwidth-0.645569620pt\relax}@{\hspace{2.5pt}}>{\centering\arraybackslash}m{\dimexpr0.063291139\textwidth-0.474683544pt\relax}@{}}
\toprule
{\fontsize{8.5}{9.8}\selectfont Task} & {\fontsize{8.5}{9.8}\selectfont \shortstack{\textit{Params}\\\textit{(M)}}} & {\fontsize{8.5}{9.8}\selectfont $N$} & {\fontsize{8.5}{9.8}\selectfont CoT} & {\fontsize{8.5}{9.8}\selectfont D3PM} & {\fontsize{8.5}{9.8}\selectfont SEDD} & {\fontsize{8.5}{9.8}\selectfont \shortstack{SEDD\\+ Ent.}} & {\fontsize{8.5}{9.8}\selectfont \shortstack{SEDD\\+ Conf.}} & {\fontsize{8.5}{9.8}\selectfont MDLM} & {\fontsize{8.5}{9.8}\selectfont ReMDM} & {\fontsize{8.5}{9.8}\selectfont \shortstack{MDLM\\+ Ent.}} & {\fontsize{8.5}{9.8}\selectfont \shortstack{MDLM\\+ Conf.}} & {\fontsize{8.5}{9.8}\selectfont \textbf{FLM}} \\
\cmidrule(r{2pt}){1-3}\cmidrule(lr{2pt}){4-4}\cmidrule(lr{2pt}){5-12}\cmidrule(l{2pt}){13-13}
\multirow{10}{*}{\textbf{Maze10}}
& \multirow[c]{10}{=}{\centering \textit{59.11}}
& 1
& \multirow[c]{10}{=}{\centering 50.5}
& 77.5 & 94.0 & 94.0 & 94.0
& 87.5 & 87.5
& \cellcolor{mazemax}\textbf{87.2}
& 87.5
& \cellcolor{mazebest}\textbf{95.7} \\

& & 2 & & 82.8
& \cellcolor{mazebest}\textbf{96.3}
& \mazedual{99.7}
& \cellcolor{mazebest}\textbf{98.8}
& 89.0 & 89.0 & 66.3 & 93.1 & 99.5 \\

& & 4 & & 85.9
& 96.7 & 98.3
& \textcolor{black!48}{\textit{NR}}
& 92.2 & 92.3 & 66.2
& \cellcolor{mazebest}\textbf{95.4}
& 99.5 \\

& & 8 & & 91.4
& 98.8
& 97.0 & 99.4
& 94.0 & 94.2 & 71.4 & 96.2 & 99.5 \\

& & 16 & & 94.2
& 98.9 & 96.9
& \cellcolor{mazemax}\textbf{99.7}
& \cellcolor{mazebest}\textbf{96.3}
& \cellcolor{mazebest}\textbf{96.4}
& 69.2 & 97.7 & 99.5 \\

& & 32 & & \cellcolor{mazebest}\textbf{95.7}
& 99.3 & 96.7 & 99.5
& 97.8 & 97.9 & 75.9 & 98.5 & 99.5 \\

& & 64 & & 96.8
& 99.7 & 96.5 & 99.6
& 98.0 & 98.4 & 77.8 & 99.1
& \cellcolor{mazemax}\textbf{99.6} \\

& & 128 & & 97.5
& \cellcolor{mazemax}\textbf{99.8}
& 96.0 & 98.9
& \cellcolor{mazemax}\textbf{98.8}
& \cellcolor{mazemax}\textbf{99.0}
& 75.9 & 99.3 & 99.6 \\
& & 256 & & \cellcolor{mazemax}\textbf{98.4}
& 99.8
& 95.8
& 99.0
& 98.3 & 98.5 & 84.4
& \cellcolor{mazemax}\textbf{99.5}
& 99.6 \\
& & 512 & & 98.1
& 99.8
& 95.0
& 99.0
& 98.3 & 98.9 & 83.6 & 99.1 & 99.6 \\
\midrule
\multirow{10}{*}{\textbf{Maze15}} & \multirow[c]{10}{=}{\centering 88.25} & 1 & \multirow[c]{10}{=}{\centering 22.3} & 45.7 & 52.6 & 52.6 & 52.6 & 49.8 & 49.8 & \cellcolor{mazemax}\textbf{49.8} & 49.8 & \cellcolor{mazebest}\textbf{95.9} \\
 &  & 2 &  & 52.2 & 61.2 & 73.3 & 77.5 & 54.9 & 54.9 & 32.0 & 75.7 & 98.9 \\
 &  & 4 &  & 58.1 & 67.5 & 81.1 & 80.5 & 62.5 & 62.9 & 33.7 & 80.4 & 99.1 \\
 &  & 8 &  & 66.8 & 73.3 & 84.8 & 85.6 & 69.6 & 69.8 & 35.2 & 85.8 & 99.1 \\
 &  & 16 &  & 74.6 & 81.1 & 84.3 & 90.4 & 76.2 & 77.4 & 38.5 & 88.2 & 99.1 \\
 &  & 32 &  & 82.4 & 86.8 & \cellcolor{mazemax}\textbf{87.4} & 94.4 & 83.7 & 84.4 & 39.6 & 93.1 & 99.1 \\
 &  & 64 &  & 87.0 & 90.8 & 85.4 & \cellcolor{mazebest}\textbf{96.4} & 88.3 & 88.6 & 39.4 & 94.0 & \cellcolor{mazemax}\textbf{99.2} \\
 &  & 128 &  & 90.8 & 93.2 & 81.6 & \cellcolor{mazemax}\textbf{96.6} & 91.8 & 92.1 & 39.4 & \mazedual{97.8} & 99.2 \\
 &  & 256 &  & 91.5 & \cellcolor{mazebest}\textbf{95.0} & 79.7 & 96.1 & 93.7 & 94.2 & 39.9 & 97.8 & 99.2 \\
 &  & 512 &  & \cellcolor{mazemax}{\textbf{92.6}} & \cellcolor{mazemax}\textbf{95.4} & 78.4 & 95.9 & \mazedual{95.3} & \mazedual{96.2} & 42.1 & 97.1 & 99.2 \\
\midrule
\multirow{10}{*}{\textbf{Maze20}}
& \multirow[c]{10}{=}{\centering 258.81}
& 1
& \multirow[c]{10}{=}{\centering 32.2}
& 33.5
& 33.6
& 33.6
& 33.6
& 35.6
& 35.6
& \cellcolor{mazemax}\textbf{35.3}
& 35.6
& 71.4 \\

& & 2 &
& 37.3
& 37.8
& 41.7
& 62.4
& 41.2
& 42.6
& 30.8
& 49.3
& 88.1 \\

& & 4 &
& 44.0
& 40.8
& 68.3
& 70.1
& 48.2
& 48.8
& 28.7
& 54.0
& 91.3 \\

& & 8 &
& 51.0
& 49.4
& 84.0
& 76.5
& 55.8
& 54.1
& 29.5
& 60.4
& 92.1 \\

& & 16 &
& 60.8
& 56.0
& 89.0
& 79.9
& 63.0
& 64.9
& 30.1
& 67.0
& 92.4 \\

& & 32 &
& 67.5
& 63.8
& 90.0
& 83.3
& 70.6
& 71.7
& 30.0
& 73.1
& 92.5 \\

& & 64 &
& 77.3
& 73.2
& \cellcolor{mazemax}\textbf{90.4}
& 85.9
& 80.2
& 80.1
& 29.7
& 87.4
& 92.5 \\

& & 128 &
& 81.8
& 79.4
& 90.0
& 88.4
& 83.7
& 86.7
& 31.1
& 94.7
& 92.6 \\

& & 256 &
& \cellcolor{mazemax}\textbf{86.5}
& 82.5
& 87.2
& \cellcolor{mazemax}\textbf{92.1}
& 87.9
& 89.1
& 31.2
& \cellcolor{mazebest}\textbf{98.3}
& 92.6 \\

& & 512 &
& 86.3
& \cellcolor{mazemax}\textbf{87.1}
& 84.1
& 91.5
& \cellcolor{mazemax}\textbf{90.1}
& \cellcolor{mazemax}\textbf{90.6}
& 32.8
& \cellcolor{mazemax}\textbf{99.1}
& \cellcolor{mazemax}\textbf{92.9} \\
\bottomrule
\end{tabular}
\endgroup
\end{table}

\section{Experiments}
\label{sec:experiments}

Our experiments are designed to evaluate the \emph{reasoning capability} of continuous flow-based models rather than their general language modeling ability. We therefore focus on structured, non-linguistic tasks in two complementary reasoning domains: maze planning (Section \ref{sec:maze-reasoning}) and Sudoku (Section \ref{sec:sudoku-reasoning}). The maze datasets provide a controlled family of planning problems whose difficulty increases with maze size, allowing us to study not only how performance varies with the number of denoising steps but also how it depends on task difficulty and model capacity. Sudoku-Extreme provides a substantially different constraint-satisfaction problem and allows us to test whether the behavior of FLM extends beyond path planning. Finally, motivated by our theoretical analysis, we conduct a superposition analysis of FLM's intermediate states to examine whether the information they retain contributes to subsequent reasoning (Section \ref{sec:superposition-analysis}).

\subsection{Experimental Setup}
\label{sec:experimental-setup}

\paragraph{Benchmarks.}
For maze reasoning, we construct Maze10, Maze15, and Maze20 based on the maze-generation procedure in \citet{zhang2026vfscale}. With a consistent task formulation, these datasets provide a controlled setting for studying reasoning as maze size and path-planning difficulty increase.
We additionally evaluate on Sudoku-Extreme, which requires completing a $9\times9$ board that satisfies all row, column, and sub-grid constraints. To assess distributional generalization, we further test models trained on Sudoku-Extreme directly on Standard Sudoku without additional fine-tuning.
Construction procedures, splits, and detailed statistics for all datasets are provided in Appendix~\ref{app:dataset-details}.

\paragraph{Baselines and Evaluation.}
We compare FLM with representative discrete diffusion models, including D3PM
\citep{austin2021structured}, SEDD \citep{lou2023discrete}, MDLM
\citep{sahoo2024simple}, and ReMDM \citep{wang2026remasking}. Among these, ReMDM modifies the reverse transition to allow previously generated tokens
to be remasked and resampled, thereby permitting error correction. For masked discrete diffusion models, we additionally evaluate
confidence-based remasking, which revisits low-confidence tokens, and
entropy-ordered decoding, which prioritizes low-entropy positions without
revising committed tokens. A Transformer with chain-of-thought \citep{wei2022chain} serves as an additional autoregressive baseline for both maze planning and Sudoku. To ensure a fair comparison, all models use the same backbone and parameter count, with matched denoising steps for FLM and discrete diffusion methods.
Across all comparisons, we use sequence accuracy as the
primary metric, which counts a prediction as correct only when the complete
solution sequence matches the answer.

\paragraph{Implementation Details.}
For both maze and Sudoku tasks, FLM uses a bidirectional Transformer
with fixed input conditions. We train on Gaussian-to-one-hot
interpolations using a clean-token cross-entropy objective, optimize
with AdamW \citep{loshchilov2017fixing}, and evaluate EMA weights. Inference uses Euler integration
on a uniform time grid followed by tokenwise argmax.
All methods are adapted from their corresponding original codebases. Detailed implementations of FLM and all comparison baselines,
including architecture configurations, training settings, and
sampling procedures, are provided in
Appendix~\ref{app:Implementation-details}.

\begin{figure}[t]
    \centering
    \includegraphics[width=\linewidth]{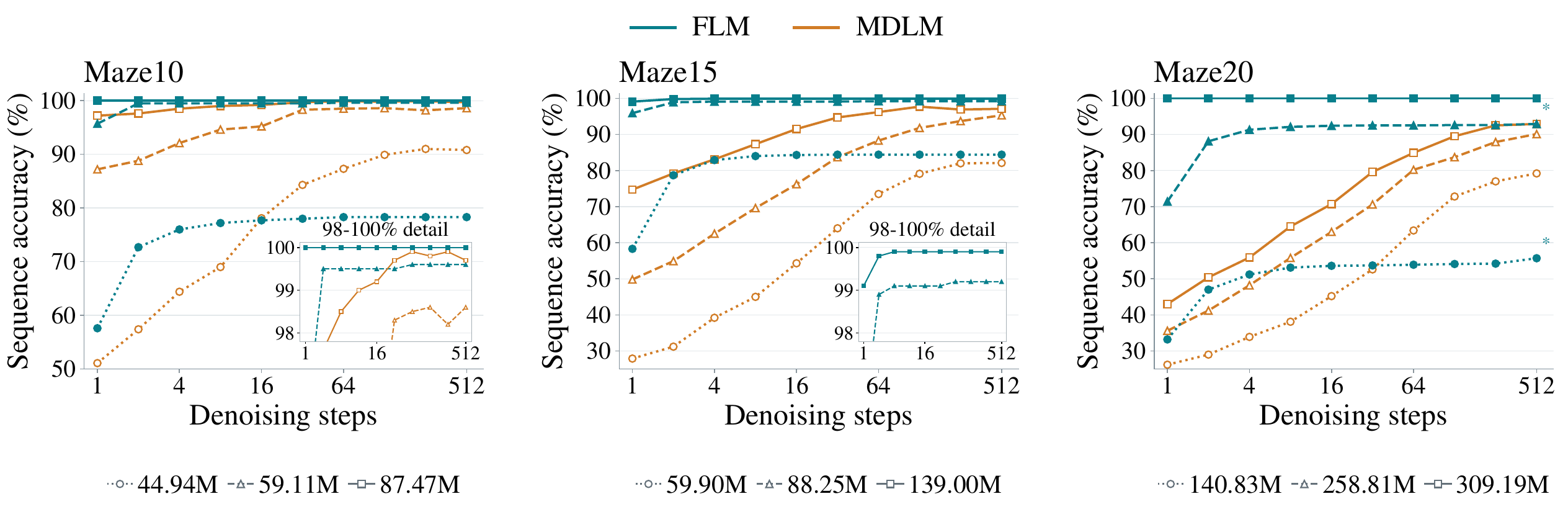}
    \caption{
\textbf{Sequence accuracy across denoising steps and model capacities.}
FLM and MDLM are evaluated on Maze10 (left), Maze15 (center), and
Maze20 (right), with three model sizes per task. Colors distinguish
methods, while marker shapes and line styles distinguish model sizes;
parameter counts are listed below each panel. Insets enlarge the
$98$ -- $100$\% sequence accuracy range.
}   
    \label{fig:maze-capacity-curves}
\end{figure}

\begin{figure}[t]
    \centering
    \includegraphics[width=\linewidth]{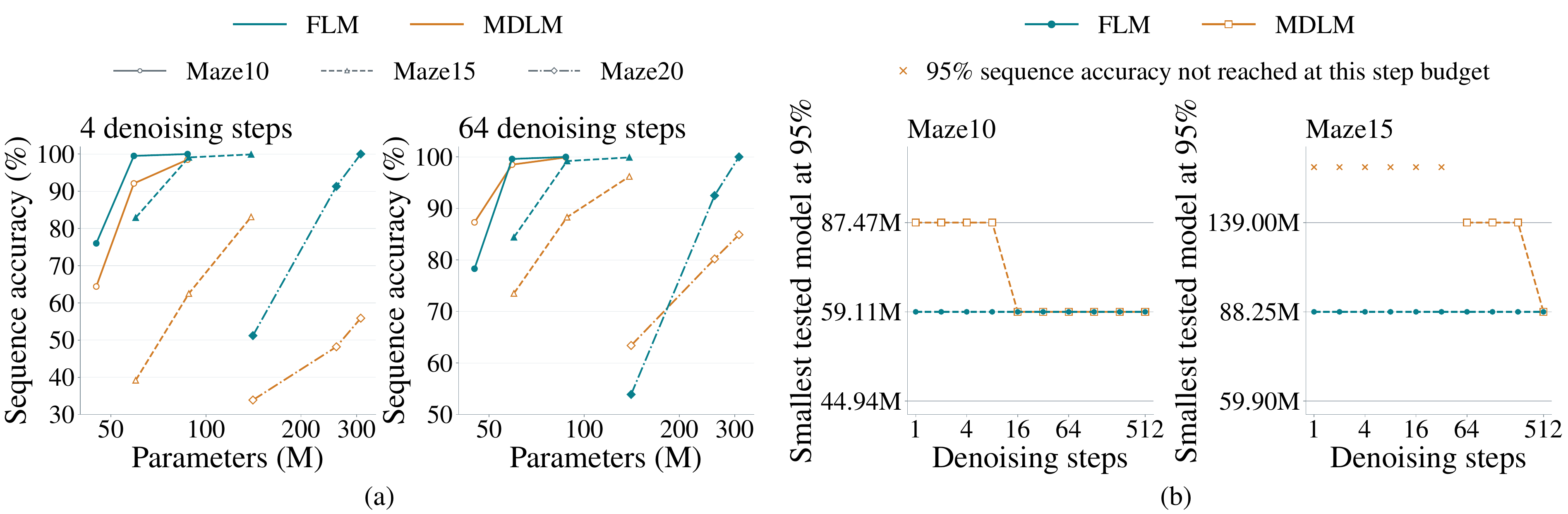}
    \caption{
\textbf{Model capacity and denoising efficiency across maze difficulties.}
(a) Sequence accuracy of FLM and MDLM on Maze10, Maze15, and Maze20 at 4 and 64 denoising steps.
(b) Smallest tested model reaching 95\% sequence accuracy at each step budget on Maze10 and Maze15.
Parameter counts in (a) use a logarithmic axis, while those in (b) are displayed as categorical levels.
Crosses indicate that no tested model reaches the target at the corresponding step budget.
}
    \label{fig:maze-capacity-fixed-steps}
\end{figure}

\subsection{Maze Planning: Reasoning Efficiency across Task Difficulties}
\label{sec:maze-reasoning}

Maze provides our main controlled benchmark for studying how continuous and discrete generative reasoning behave as planning difficulty increases. We compare against baselines and study the effect of model capacity. Additional results on wall-clock inference efficiency are given in Appendix~\ref{app:inference-efficiency}.

\paragraph{Comprehensive comparison with baselines.}
Table~\ref{tab:maze-main} compares FLM against discrete diffusion, enhanced sampling, and autoregressive baselines. FLM achieves the highest reported one-step sequence accuracy on all three tasks and maintains a clear few-step advantage on Maze15 and Maze20. These results highlight FLM's reasoning efficiency under limited denoising budgets.

\paragraph{Effect of model capacity.}
We compare FLM and MDLM across three model sizes per maze task (Figure~\ref{fig:maze-capacity-curves}). At comparable model sizes, FLM maintains higher accuracy as maze difficulty increases in several evaluated settings, with a larger gap over MDLM on Maze15 than on Maze10 (Figure~\ref{fig:maze-capacity-fixed-steps}(a)). FLM can also reach the same accuracy target with a smaller model:
on Maze15 at 64 denoising steps, the smallest tested models reaching 95\% accuracy have 88.25M parameters for FLM and 139.00M for MDLM
(Figure~\ref{fig:maze-capacity-fixed-steps}(b)).
These results show an advantage for FLM on harder maze tasks under limited model capacity.
We provide full results across all model sizes and denoising steps in Appendix~\ref{app:maze-capacity}.

\definecolor{sudokuyellow}{HTML}{FFF2CB}

% Highlight individual values without changing row height
\newcommand{\sudbest}[1]{%
    \begingroup
    \setlength{\fboxsep}{1pt}%
    \smash{\colorbox{sudokuyellow}{#1}}%
    \endgroup
}

\begin{table*}[t]
    \centering
    \caption{
    \textbf{Sequence accuracy (\%) on Sudoku-Extreme and Standard Sudoku
    under selected numbers of sampling steps.}
    Each entry reports Sudoku-Extreme / Standard Sudoku.
    All models have 86.88M parameters and are trained on Sudoku-Extreme;
    Standard Sudoku evaluates zero-shot transfer without further fine-tuning.
    \capmark{mazemax}{Yellow} highlights the highest accuracy for each dataset at each step
    budget, including ties.
    }
    \label{tab:sudoku-results}
    \setlength{\tabcolsep}{3pt}

    \begin{tabular*}{\textwidth}{
        @{\extracolsep{\fill}}lcccccc@{}
    }
        \toprule
        Method
        & \multicolumn{6}{c}{Evaluation Sampling Steps} \\
        \cmidrule(lr){2-7}
        & 1 & 2 & 16 & 64 & 256 & 512 \\
        \midrule

        Transformer + CoT
        & \multicolumn{6}{c}{2.9/3.5} \\

        \midrule

        D3PM
        & 11.2/19.5
        & 14.0/22.9
        & 32.0/38.6
        & 40.3/45.7
        & 42.3/46.1
        & 43.1/47.9 \\

        SEDD
        & 18.5/25.0
        & 21.2/28.2
        & 37.1/43.3
        & 46.8/50.8
        & 48.7/51.1
        & 47.9/53.1 \\

        SEDD + Ent.
        & 18.8/25.5
        & 36.2/\sudbest{41.4}
        & \sudbest{52.4}/\sudbest{57.9}
        & 53.6/57.8
        & 53.8/58.1
        & 54.3/57.5 \\

        SEDD + Conf.
        & 18.8/25.5
        & 26.4/32.1
        & 44.4/50.3
        & 56.2/60.6
        & 57.7/61.8
        & 57.4/62.0 \\

        MDLM
        & 19.1/27.1
        & 21.7/29.5
        & 40.0/45.0
        & 47.8/52.0
        & 47.6/53.3
        & 49.2/53.0 \\

        MDLM + Ent.
        & 19.2/27.1
        & 33.5/39.5
        & 51.2/55.6
        & 52.5/57.0
        & 51.8/58.3
        & 53.6/57.6 \\

        MDLM + Conf.
        & 19.2/27.1
        & 26.5/33.6
        & 47.6/51.5
        & \sudbest{60.2}/\sudbest{64.1}
        & \sudbest{62.3}/\sudbest{65.6}
        & \sudbest{62.2}/\sudbest{64.3} \\

        ReMDM
        & 19.2/27.1
        & 21.7/29.6
        & 38.8/44.4
        & 46.4/52.4
        & 52.0/55.0
        & 53.8/59.2 \\

        \midrule

        FLM
        & \sudbest{21.0}/\sudbest{29.5}
        & \sudbest{36.7}/\sudbest{41.4}
        & 46.3/49.5
        & 43.9/50.5
        & 46.0/51.4
        & 45.5/51.3 \\

        \bottomrule
    \end{tabular*}
\end{table*}

\subsection{Sudoku: Few-Step Constraint Reasoning and Generalization}
\label{sec:sudoku-reasoning}

We next test whether the same few-step advantage appears in a qualitatively different reasoning domain. Unlike maze planning, Sudoku requires resolving a large collection of mutually interacting row, column, and sub-grid constraints, providing a complementary test of structured reasoning.

\paragraph{Few-step refinement on Sudoku-Extreme.}
Table~\ref{tab:sudoku-results} shows that FLM improves rapidly within the
first two sampling steps. On Sudoku-Extreme, FLM achieves $21.0\%$
sequence accuracy at one step and $36.7\%$ at two steps, compared with
$19.2\%$ and $36.2\%$ for the strongest discrete baseline at each budget.
Further sampling yields more limited gains for FLM, whereas enhanced
discrete samplers achieve higher accuracy from 16 steps onward.
These results support FLM's effectiveness under few step denoising.

\paragraph{Generalization to Standard Sudoku.}
We further evaluate models trained on Sudoku-Extreme directly on Standard
Sudoku without additional fine-tuning. At one sampling step, FLM achieves
$29.5\%$ sequence accuracy, exceeding the strongest discrete result of
$27.1\%$. 
% At two steps, FLM reaches $41.4\%$, matching SEDD + Ent.
The single-step advantage therefore persists under this distribution
shift.

\subsection{Superposition Analysis}
\label{sec:superposition-analysis}
\begin{wrapfigure}{r}{0.55\textwidth}
    \centering
    \vspace{-1.5em}
    \includegraphics[width=\linewidth]
    {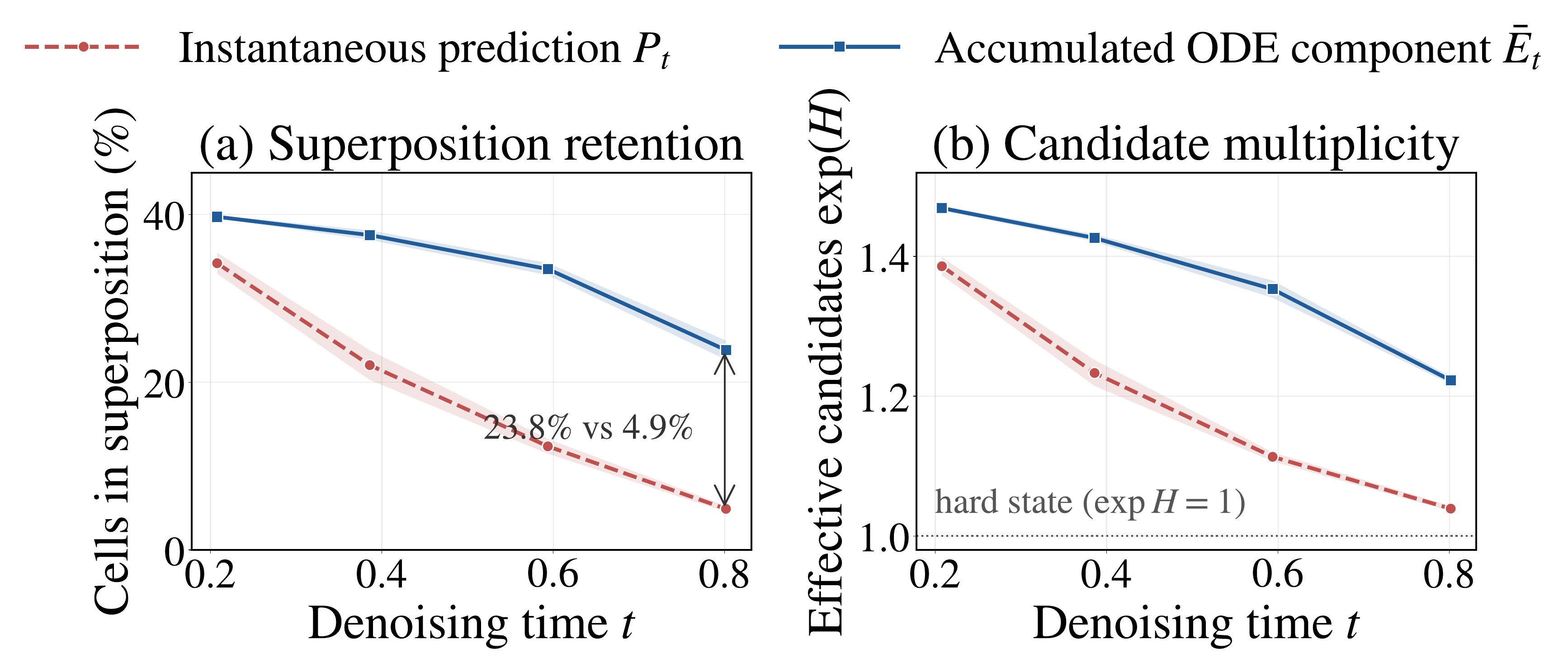}
    \caption{\textbf{Candidate retention during FLM denoising.}
We compare the instantaneous clean prediction $\mP_t$ with the accumulated
component $\overline{\mathbf E}_t$. (a) shows the fraction of ambiguous
cells whose second-largest digit probability is at least $0.05$. (b)
shows the effective number of candidates, $\exp(H)$. The accumulated component retains more non-top-1 information
than the instantaneous prediction.}
    \label{fig:superposition}
    \vspace{-1.2em}
\end{wrapfigure}
\paragraph{Continuous states retain non-top-1 candidate information.}
We first examine the accumulated denoising component
$\overline{\mathbf E}_t$ from
Theorem~\ref{thm:flm-candidate-superposition} and compare it with the
instantaneous clean prediction $\mP_t$ on Sudoku-Extreme. For each analyzed
blank cell, we normalize the mass over the nine digit coordinates and denote
the resulting distribution by $p_t(d)$, where $d\in\{1,\ldots,9\}$. We measure
% its uncertainty using the Shannon entropy
% $
% H(p_t)=-\sum_{d=1}^{9}p_t(d)\log p_t(d),
% $
% and report $\exp(H(p_t))$ as the effective number of candidates. 
% As shown in Figure~\ref{fig:superposition}, $\mP_t$ quickly concentrates on
% one digit, whereas $\overline{\mathbf E}_t$ retains more mass on secondary
% candidates. At $t=0.80$, $23.8\%$ of cells in
% $\overline{\mathbf E}_t$ retain a second candidate with probability at least
% $0.05$, compared with only $4.9\%$ in $\mP_t$. Its larger effective candidate
% count further indicates that the accumulated component remains less
% concentrated. These observations are consistent with our theoretical result
% that a persistent continuous state can retain a nontrivial candidate
% superposition.
We quantify the concentration of $p_t$ through the effective number of candidates, $\exp(H(p_t))$, where
$
H(p_t)=-\sum_{d=1}^{9}p_t(d)\log p_t(d)
$
is the Shannon entropy computed with natural logarithms. For a uniform distribution over $k$ candidate digits, $H(p_t)=\log k$ and hence $\exp(H(p_t))=k$, giving this measure a direct interpretation as an effective support size.
As shown in Figure~\ref{fig:superposition}, $\mP_t$ quickly concentrates on one digit, whereas $\overline{\mathbf E}_t$ retains more mass on secondary candidates. At $t=0.80$, $23.8\%$ of cells in $\overline{\mathbf E}_t$ retain a second candidate with probability at least $0.05$, compared with only $4.9\%$ in $\mP_t$. The larger effective candidate count of $\overline{\mathbf E}_t$ further indicates that its mass remains more broadly distributed across candidate digits. These observations are consistent with our theoretical result.

\begin{wrapfigure}{r}{0.53\textwidth}
    \centering
    \vspace{-1.0em}
    \includegraphics[width=\linewidth]
    {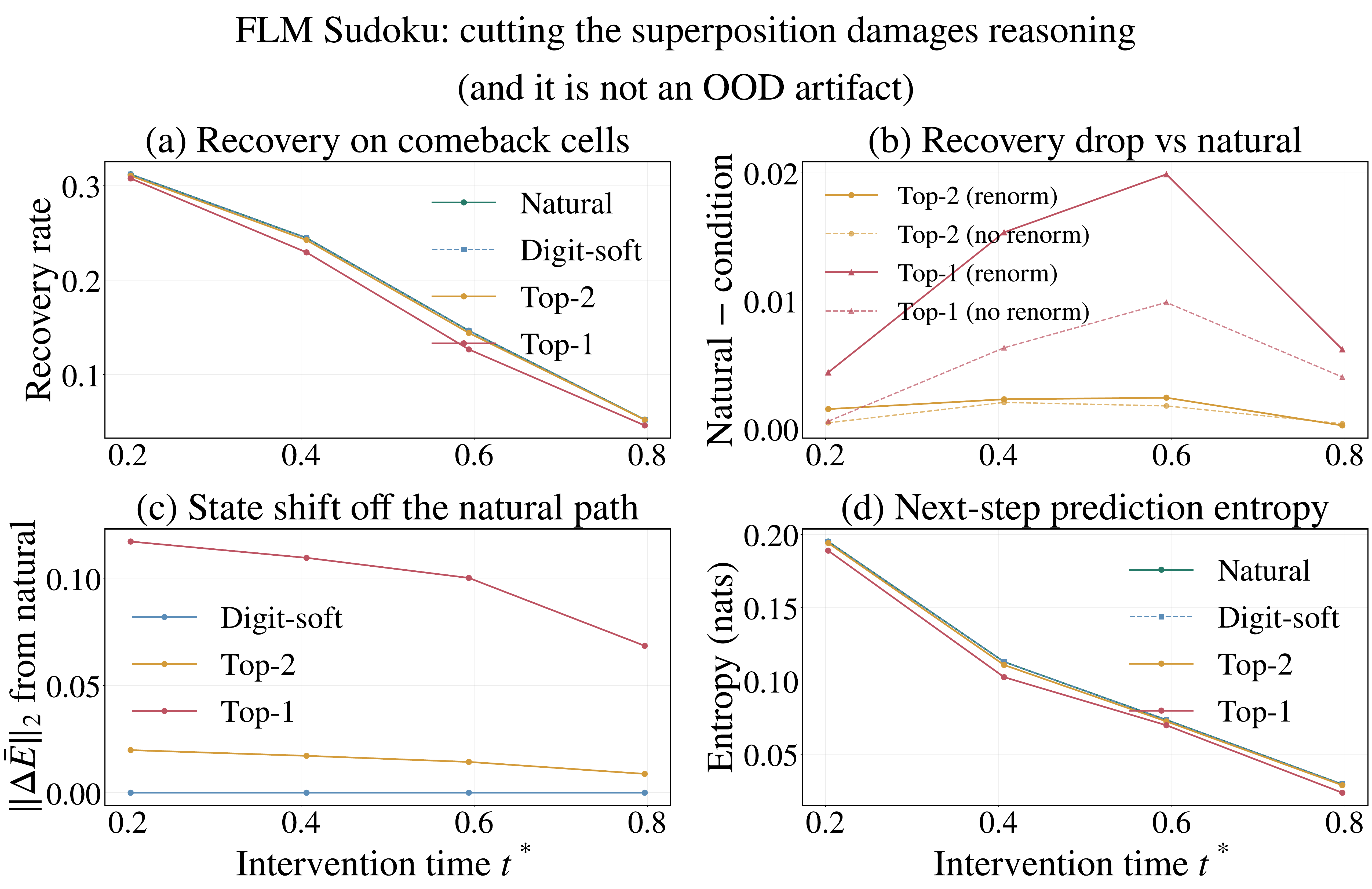}
    \caption{\textbf{Intermediate-state intervention.}
    Removing non-top-1 candidate information reduces recovery,
    while Digit-soft causes little degradation.}
    \label{fig:flm-superposition}
    \vspace{-1.5em}
\end{wrapfigure}
\paragraph{Intermediate-state intervention.}
Candidate retention alone does not establish that the model uses the retained
information. We therefore intervene at an intermediate time $t^\star$ while
keeping the input, initial noise, model, and subsequent denoising schedule
fixed. 

We compare three intermediate-state interventions against an unmodified baseline. \emph{Natural} leaves the accumulated continuous state unchanged. \emph{Digit-soft} masks locally infeasible digit
coordinates but retains and renormalizes all locally feasible digits. \emph{Top-$2$} and \emph{Top-$1$} apply the same candidate filtering operation while retaining only the two or one highest-scoring feasible digits, respectively. Thus, Top-$2$ and Top-$1$ progressively remove non-top-1 candidate information from the intermediate state. The Digit-soft condition controls for the effect of the projection and renormalization operation itself: it modifies the state but does not remove any locally feasible candidate. We also compare renormalized and non-renormalized top-$k$ interventions. The same degradation pattern in both settings suggests that the result is not caused solely by rescaling the retained coordinates. We measure \emph{cell-level recovery rate} on ambiguous blank cells whose correct digit ranks second among the nine digit coordinates in the unmodified accumulated state at $t^\star$. Recovery rate measures final cell accuracy on cells whose
correct digit ranks second before intervention.

As shown in Figure~\ref{fig:flm-superposition}, cell-level
recovery rate decreases as more candidate information is removed,
$
\mathrm{Recovery}_{\mathrm{Natural}}
\geq
\mathrm{Recovery}_{\mathrm{Top\text{-}2}}
>
\mathrm{Recovery}_{\mathrm{Top\text{-}1}}.
$
Digit-soft causes little degradation, whereas Top-$2$ and especially Top-$1$
reduce recovery. Moreover, Top-$1$ lowers rather than increases the entropy of
the next-step prediction, making increased prediction uncertainty an unlikely
explanation for its poorer performance. These controlled interventions provide empirical evidence that non-top-1 candidate information retained in
the continuous state contributes to subsequent denoising.

\begin{wraptable}{r}{0.41\textwidth}
\vspace{-1.5em}
\centering
\small
\setlength{\tabcolsep}{4pt}
\caption{
\textbf{Test F1-score for solution-path prediction along
an eight-step denoising trajectory.}
Bold highlights the step with the largest Raw--Hard gap.
}
\label{tab:decoder-time-course}
\begin{tabular}{rcccc}
\toprule
$k$ & Raw & Top-2 & Hard & Raw $-$ Hard \\
\midrule
0 & 0.3799 & 0.4072 & 0.4090 & $-0.0291$ \\
1 & 0.3927 & 0.4066 & 0.4050 & $-0.0123$ \\
2 & 0.4765 & 0.4218 & 0.4091 & $+0.0674$ \\
3 & 0.6484 & 0.4674 & 0.4565 & $+0.1919$ \\
\textbf{4}
& \textbf{0.8080}
& \textbf{0.6614}
& \textbf{0.5929}
& $\mathbf{+0.2151}$ \\
5 & 0.8989 & 0.8331 & 0.8172 & $+0.0817$ \\
6 & 0.9158 & 0.9109 & 0.9135 & $+0.0024$ \\
7 & 0.9189 & 0.9184 & 0.9189 & $\approx 0$ \\
8 & 0.9189 & 0.9188 & 0.9189 & $\approx 0$ \\
\bottomrule
\end{tabular}
\vspace{-1.5em}
\end{wraptable}

\paragraph{Decoding task information from intermediate states.}
Motivated by Theorem~\ref{thm:flm-decoder-accessibility},
we study whether FLM states make solution-path
information more accessible than hard token assignments during denoising. On Maze15, we train decoders for each representation and step of a frozen FLM's $N=8$
denoising trajectory. Each decoder receives the maze input
and the corresponding state representation and predicts
whether each cell belongs to the solution path.
We evaluate F1-score from pooled test-cell predictions
at a fixed probability threshold of 0.5.
Experimental details are in
Appendix~\ref{app:flm-decoder-experiments}.

We compare raw states $Z_k$, Top-2 representations, and
one-hot argmax assignments (Hard). Top-2 retains the two
largest entries of $\operatorname{softmax}(Z_k)$ at each
position, setting the remaining coordinates to zero.
Table~\ref{tab:decoder-time-course} shows that the raw-state
advantage is concentrated at intermediate steps.
The largest Raw--Hard gap occurs at $k=4$, where Raw achieves
an F1-score of \textbf{0.8080}, compared with 0.6614 for
Top-2 and 0.5929 for Hard.
By $k=7,8$, Raw and Hard agree to four decimal places.
These results suggest that solution-path information is
more accessible to the tested decoders in intermediate
continuous states, before comparable readout performance
is achieved from hard token assignments.

\section{Related Work}

% \paragraph{Discrete diffusion language models.}
% Discrete diffusion language models have advanced through improved corruption processes, training objectives, and sampling strategies. D3PM \citep{austin2021structured} and SEDD \citep{lou2023discrete} established discrete transition and score-based formulations, while MDLM \citep{sahoo2024simple} simplified masked-denoising training. LLaDA \citep{nie2026large} and Dream \citep{ye2025dream} demonstrated competitive language modeling at scale. Confidence- and entropy-based sampling guide token updates, while ReMDM \citep{wang2026remasking} enables iterative refinement through remasking. Other variants explore semantic clusters, subtokens, or partial noising. Despite these developments, denoising still proceeds through categorical intermediate states.

\paragraph{Continuous diffusion language models.}
Diffusion-LM \citep{li2022diffusion} and DiffuSeq \citep{gong2022diffuseq} denoise embeddings, while TextLDM \citep{jiang2026textldm} uses autoencoding latents. Plaid \citep{gulrajani2023likelihood}, ELF \citep{hu2026elf}, and LangFlow \citep{chen2026langflow} advance continuous diffusion and flow matching for language modeling. Other representations include relaxed vocabulary states \citep{han2023ssd,mahabadi2024tess,tae2025tess}, Riemannian categorical distributions \citep{cheng2024categorical,davis2024fisher}, and continuous binary encodings in Analog Bits \citep{chen2022analog} and CoBit \citep{batzolis2026cobit}.

\paragraph{Reasoning in continuous latent spaces.}
Continuous latent reasoning performs intermediate computation in hidden representations without expressing every step as text. Coconut \citep{hao2024training} feeds hidden states back as input embeddings, forming continuous thought chains that support reasoning over multiple possible next steps. Recurrent approaches refine internal representations through repeated updates. The Hierarchical Reasoning Model (HRM) \citep{wang2025hierarchicalreasoningmodel} couples two recurrent modules at different timescales, while the Tiny Recursive Model (TRM) \citep{jolicoeur2025less} uses a single small network to recursively update latent reasoning features and answer representations. These methods highlight the potential of iterative continuous computation for reasoning tasks.

\section{Conclusion}
% We studied how continuous and discrete state spaces affect the reasoning
% efficiency of diffusion language models. 
In this work, we have examined the possible advantages of a continuous state space over a discrete one for diffusion language models.
Experiments on maze planning and Sudoku show that FLM can achieve higher sequence accuracy with few denoising steps. Our superposition analysis and intermediate-state experiments support a mechanism through which continuous states preserve candidate information and use it in subsequent reasoning.

\paragraph{Limitations and future work.}
This work focuses on structured reasoning tasks and does not evaluate
general language modeling ability. Further study on text-based reasoning
benchmarks is needed to assess whether the observed benefits extend to
natural-language reasoning. Another direction is to investigate how these
benefits change when scaling FLMs to larger model capacities.

\section*{Acknowledgements}
This work was supported in part by the Hector Foundation, by the Max Planck ETH Center for Learning Systems, and by EPSRC Turing AI World-Leading Research Fellowship No. EP/X040062/1 and EPSRC AI Hub on Mathematical Foundations of Intelligence: An "Erlangen Programme" for AI No. EP/Y028872/1.

\bibliography{iclr2027_conference}

@article{jolicoeur2025less,
  title={Less is more: Recursive reasoning with tiny networks},
  author={Jolicoeur-Martineau, Alexia},
  journal={arXiv preprint arXiv:2510.04871},
  year={2025}
}

@article{loshchilov2017fixing,
  title={Fixing weight decay regularization in adam},
  author={Loshchilov, Ilya and Hutter, Frank and others},
  journal={arXiv preprint arXiv:1711.05101},
  volume={5},
  number={5},
  pages={5},
  year={2017}
}

@article{wang2026remasking,
  title={Remasking discrete diffusion models with inference-time scaling},
  author={Wang, Guanghan and Schiff, Yair and Sahoo, Subham and Kuleshov, Volodymyr},
  journal={Advances in Neural Information Processing Systems},
  volume={38},
  pages={147282--147339},
  year={2026}
}

@article{wei2022chain,
  title={Chain-of-thought prompting elicits reasoning in large language models},
  author={Wei, Jason and Wang, Xuezhi and Schuurmans, Dale and Bosma, Maarten and Xia, Fei and Chi, Ed and Le, Quoc V and Zhou, Denny and others},
  journal={Advances in neural information processing systems},
  volume={35},
  pages={24824--24837},
  year={2022}
}

@inproceedings{zhang2026vfscale,
  title={VFScale: Intrinsic reasoning through verifier-free test-time scalable diffusion model},
  author={Zhang, Tao and Pan, Jia-Shu and Feng, Ruiqi and Wu, Tailin},
  booktitle={International Conference on Learning Representations},
  volume={2026},
  pages={59758--59787},
  year={2026}
}

@article{davis2024fisher,
  title={Fisher flow matching for generative modeling over discrete data},
  author={Davis, Oscar and Kessler, Samuel and Petrache, Mircea and Ceylan, {\.I}smail {\.I} and Bronstein, Michael and Bose, Avishek J},
  journal={Advances in Neural Information Processing Systems},
  volume={37},
  pages={139054--139084},
  year={2024}
}

@article{cheng2024categorical,
  title={Categorical flow matching on statistical manifolds},
  author={Cheng, Chaoran and Li, Jiahan and Peng, Jian and Liu, Ge},
  journal={Advances in Neural Information Processing Systems},
  volume={37},
  pages={54787--54819},
  year={2024}
}

@inproceedings{tae2025tess,
  title={Tess 2: A large-scale generalist diffusion language model},
  author={Tae, Jaesung and Ivison, Hamish and Kumar, Sachin and Cohan, Arman},
  booktitle={Proceedings of the 63rd Annual Meeting of the Association for Computational Linguistics (Volume 1: Long Papers)},
  pages={21171--21188},
  year={2025}
}

@inproceedings{mahabadi2024tess,
  title={Tess: Text-to-text self-conditioned simplex diffusion},
  author={Mahabadi, Rabeeh Karimi and Ivison, Hamish and Tae, Jaesung and Henderson, James and Beltagy, Iz and Peters, Matthew E and Cohan, Arman},
  booktitle={Proceedings of the 18th Conference of the European Chapter of the Association for Computational Linguistics (Volume 1: Long Papers)},
  pages={2347--2361},
  year={2024}
}

@article{chen2022analog,
  title={Analog bits: Generating discrete data using diffusion models with self-conditioning},
  author={Chen, Ting and Zhang, Ruixiang and Hinton, Geoffrey},
  journal={arXiv preprint arXiv:2208.04202},
  year={2022}
}

@article{gulrajani2023likelihood,
  title={Likelihood-based diffusion language models},
  author={Gulrajani, Ishaan and Hashimoto, Tatsunori B},
  journal={Advances in Neural Information Processing Systems},
  volume={36},
  pages={16693--16715},
  year={2023}
}

@inproceedings{han2023ssd,
  title={Ssd-lm: Semi-autoregressive simplex-based diffusion language model for text generation and modular control},
  author={Han, Xiaochuang and Kumar, Sachin and Tsvetkov, Yulia},
  booktitle={Proceedings of the 61st Annual Meeting of the Association for Computational Linguistics (Volume 1: Long Papers)},
  pages={11575--11596},
  year={2023}
}

@article{jiang2026textldm,
  title={TextLDM: Language Modeling with Continuous Latent Diffusion},
  author={Jiang, Jiaxiu and Ren, Jingjing and Li, Wenbo and Wang, Bo and Sun, Haoze and Yang, Yijun and Liu, Jianhui and Zhang, Yanbing and Zheng, Shenghe and Zhang, Yuan and others},
  journal={arXiv preprint arXiv:2605.07748},
  year={2026}
}

@article{gong2022diffuseq,
  title={Diffuseq: Sequence to sequence text generation with diffusion models},
  author={Gong, Shansan and Li, Mukai and Feng, Jiangtao and Wu, Zhiyong and Kong, LingPeng},
  journal={arXiv preprint arXiv:2210.08933},
  year={2022}
}

@article{li2022diffusion,
  title={Diffusion-lm improves controllable text generation},
  author={Li, Xiang and Thickstun, John and Gulrajani, Ishaan and Liang, Percy S and Hashimoto, Tatsunori B},
  journal={Advances in neural information processing systems},
  volume={35},
  pages={4328--4343},
  year={2022}
}

@article{chen2026langflow,
  title={LangFlow: Continuous Diffusion Rivals Discrete in Language Modeling},
  author={Chen, Yuxin and Liang, Chumeng and Sui, Hangke and Guo, Ruihan and Cheng, Chaoran and You, Jiaxuan and Liu, Ge},
  journal={arXiv preprint arXiv:2604.11748},
  year={2026}
}

@article{hu2026elf,
  title={Elf: Embedded language flows},
  author={Hu, Keya and Qiu, Linlu and Lu, Yiyang and Zhao, Hanhong and Li, Tianhong and Kim, Yoon and Andreas, Jacob and He, Kaiming},
  journal={arXiv preprint arXiv:2605.10938},
  year={2026}
}

@article{lee2026flow,
  title={Flow map language models: One-step language modeling via continuous denoising},
  author={Lee, Chanhyuk and Yoo, Jaehoon and Agarwal, Manan and Shah, Sheel and Huang, Jerry and Raghunathan, Aditi and Hong, Seunghoon and Boffi, Nicholas M and Kim, Jinwoo},
  journal={arXiv preprint arXiv:2602.16813},
  year={2026}
}

@article{hao2024training,
  title={Training large language models to reason in a continuous latent space},
  author={Hao, Shibo and Sukhbaatar, Sainbayar and Su, DiJia and Li, Xian and Hu, Zhiting and Weston, Jason and Tian, Yuandong},
  journal={arXiv preprint arXiv:2412.06769},
  year={2024}
}

@article{yao2023tree,
  title={Tree of thoughts: Deliberate problem solving with large language models},
  author={Yao, Shunyu and Yu, Dian and Zhao, Jeffrey and Shafran, Izhak and Griffiths, Tom and Cao, Yuan and Narasimhan, Karthik},
  journal={Advances in neural information processing systems},
  volume={36},
  pages={11809--11822},
  year={2023}
}

@article{ye2025dream,
  title={Dream 7b: Diffusion large language models},
  author={Ye, Jiacheng and Xie, Zhihui and Zheng, Lin and Gao, Jiahui and Wu, Zirui and Jiang, Xin and Li, Zhenguo and Kong, Lingpeng},
  journal={arXiv preprint arXiv:2508.15487},
  year={2025}
}

@article{nie2026large,
  title={Large language diffusion models},
  author={Nie, Shen and Zhu, Fengqi and You, Zebin and Zhang, Xiaolu and Ou, Jingyang and Hu, Jun and Zhou, Jun and Lin, Yankai and Wen, Ji-Rong and Li, Chongxuan},
  journal={Advances in Neural Information Processing Systems},
  volume={38},
  pages={50608--50646},
  year={2026}
}

@article{sahoo2024simple,
  title={Simple and effective masked diffusion language models},
  author={Sahoo, Subham S and Arriola, Marianne and Schiff, Yair and Gokaslan, Aaron and Marroquin, Edgar and Chiu, Justin T and Rush, Alexander and Kuleshov, Volodymyr},
  journal={Advances in Neural Information Processing Systems},
  volume={37},
  pages={130136--130184},
  year={2024}
}

@article{lou2023discrete,
  title={Discrete diffusion modeling by estimating the ratios of the data distribution},
  author={Lou, Aaron and Meng, Chenlin and Ermon, Stefano},
  journal={arXiv preprint arXiv:2310.16834},
  year={2023}
}

@article{austin2021structured,
  title={Structured denoising diffusion models in discrete state-spaces},
  author={Austin, Jacob and Johnson, Daniel D and Ho, Jonathan and Tarlow, Daniel and Van Den Berg, Rianne},
  journal={Advances in neural information processing systems},
  volume={34},
  pages={17981--17993},
  year={2021}
}

@article{he2026reasoning,
  title={Reasoning with Latent Tokens in Diffusion Language Models},
  author={He, Andre and Welleck, Sean and Fried, Daniel},
  journal={arXiv preprint arXiv:2602.03769},
  year={2026}
}

@article{zhu2026reasoning,
  title={Reasoning by superposition: A theoretical perspective on chain of continuous thought},
  author={Zhu, Hanlin and Hao, Shibo and Hu, Zhiting and Jiao, Jiantao and Russell, Stuart J and Tian, Yuandong},
  journal={Advances in Neural Information Processing Systems},
  volume={38},
  pages={79931--79963},
  year={2026}
}

@article{xu2025toward,
  title={Toward large reasoning models: A survey of reinforced reasoning with large language models},
  author={Xu, Fengli and Hao, Qianyue and Shao, Chenyang and Zong, Zefang and Li, Yu and Wang, Jingwei and Zhang, Yunke and Wang, Jingyi and Lan, Xiaochong and Gong, Jiahui and others},
  journal={Patterns},
  volume={6},
  number={10},
  year={2025},
  publisher={Elsevier}
}

@inproceedings{patil2025advancing,
  title={Advancing reasoning in large language models: Promising methods and approaches},
  author={Patil, Avinash and Jadon, Aryan},
  booktitle={International Conference on Computational Intelligence and Soft Computing},
  pages={284--298},
  year={2025},
  organization={Springer}
}

@misc{roosCategoricalFlowMaps2026,
    title = {Categorical {Flow} {Maps}},
    url = {http://arxiv.org/abs/2602.12233},
    doi = {10.48550/arXiv.2602.12233},
    urldate = {2026-08-16},
    publisher = {arXiv},
    author = {Roos, Daan and Davis, Oscar and Eijkelboom, Floor and Bronstein, Michael and Welling, Max and Ceylan, İsmail İlkan and Ambrogioni, Luca and Meent, Jan-Willem van de},
    month = feb,
    year = {2026},
    note = {arXiv:2602.12233 [cs.LG]},
}

@misc{potaptchik2026discreteflowmaps,
      title={Discrete Flow Maps}, 
      author={Peter Potaptchik and Jason Yim and Adhi Saravanan and Peter Holderrieth and Eric Vanden-Eijnden and Michael S. Albergo},
      year={2026},
      eprint={2604.09784},
      archivePrefix={arXiv},
      primaryClass={stat.ML},
      url={https://arxiv.org/abs/2604.09784}, 
}

@misc{wang2025hierarchicalreasoningmodel,
      title={Hierarchical Reasoning Model}, 
      author={Guan Wang and Jin Li and Yuhao Sun and Xing Chen and Changling Liu and Yue Wu and Meng Lu and Sen Song and Yasin Abbasi Yadkori},
      year={2025},
      eprint={2506.21734},
      archivePrefix={arXiv},
      primaryClass={cs.AI},
      url={https://arxiv.org/abs/2506.21734}, 
}

@article{batzolis2026cobit,
  title={CoBit: Language Modeling with Bitstream Diffusion},
  author={Batzolis, Georgios and Girolami, Mark and Ambrogioni, Luca},
  journal={arXiv preprint arXiv:2605.07013},
  year={2026}
}
\bibliographystyle{iclr2027_conference}

\newpage
\appendix

% Requires \usepackage{hyperref}
% Replace the previous \appitem definition with this one.
\newcommand{\appitem}[1]{\item[\ref{#1}] \nameref{#1}}

\section*{Appendix Overview}

The appendix provides training and sampling algorithms, theoretical
proofs, and additional experimental details and results.

\begin{itemize}
    \appitem{sec:algorithms}
    \appitem{app:related-work}
    \appitem{app:theorem proof}
    \begin{itemize}
        \appitem{app:flm-superposition-proof}
        \begin{itemize}
            \appitem{app:subsec:candidate-vertices}
            \appitem{app:subsec:superposition-geometry}
            \appitem{app:subsec:complete-graph-accessibility}
            \appitem{app:subsec:candidate-convex-decomposition}
            \appitem{app:subsec:flm-ode-derivation}
            \appitem{app:subsec:flm-integral-proof}
        \end{itemize}

        \appitem{app:history-mechanism}
        \appitem{app:history-refinement}
        \appitem{app:flm-decoder-proof}
    \end{itemize}

    \appitem{app:experiments}
    \begin{itemize}
        \appitem{app:dataset-details}
        \begin{itemize}
            \appitem{app:maze-datasets}
            \appitem{app:sudoku-datasets}
        \end{itemize}

        \appitem{app:Implementation-details}
        \begin{itemize}
            \appitem{app:architecture-optimization}
            \appitem{app:inference-details}
            \appitem{app:flm-decoder-experiments}
        \end{itemize}

        \appitem{app:full-state-intervention}
        \appitem{app:maze-capacity}
        \appitem{app:inference-efficiency}
        
    \end{itemize}
\end{itemize}

\newpage

\section{Training and Sampling Algorithms}
\label{sec:algorithms}

Algorithms~\ref{alg:cfm-training} and~\ref{alg:cfm-sampling} summarize the
training and sampling procedures used by our conditional FLM.
Algorithm~\ref{alg:cfm-training} constructs a continuous noisy answer state
by interpolating between Gaussian noise and the clean tokenized target, and
trains the model to predict the clean answer tokens. At inference time,
Algorithm~\ref{alg:cfm-sampling} starts from Gaussian noise, evolves the
continuous answer state using the learned flow for $N$ denoising evaluations,
and discretizes the final state into a token sequence and the corresponding
path.
\begin{algorithm}[t]
\caption{Training the conditional flow matching model}
\label{alg:cfm-training}
\begin{algorithmic}[1]
\Require Dataset $\mathcal D$, tokenizer $\operatorname{tok}$, model
$\mP_\theta$
\Repeat
    \State Sample $(\gG,\vq,Y)\sim\mathcal D$
    \State $\mathbf c\gets
    \operatorname{tok}(\operatorname{ser}_{\mathrm{in}}(\gG,\vq))
    \mathbin\Vert[\mathrm{SEP}]$
    \State $\mathbf x\gets
    \operatorname{tok}(\operatorname{ser}_{\mathrm{out}}(Y))
    \mathbin\Vert[\mathrm{EOS}]$
    \State $\mathbf C_{\gG,\vq}\gets\operatorname{OH}(\mathbf c)$,
    $\mZ_1\gets\operatorname{OH}(\mathbf x)$
    \State Sample $\mZ_0\sim\mathcal N(\vzero,\mI)$ and
    $t\sim\operatorname{Unif}[0,1]$
    \State $\mZ_t\gets(1-t)\mZ_0+t\mZ_1$
    \State $\mP\gets
    \mP_\theta(\mZ_t,t,\mathbf C_{\gG,\vq})$
    \State Update $\theta$ using
    $\operatorname{CE}(\mZ_1,\mP)$ over answer positions
\Until{convergence}
\end{algorithmic}
\end{algorithm}

\begin{algorithm}[t]
\caption{Sampling from the conditional flow matching model}
\label{alg:cfm-sampling}
\begin{algorithmic}[1]
\Require Graph $\gG$, query $\vq$, tokenizer $\operatorname{tok}$, model
$\mP_\theta$, answer length $M_x$, number of evaluations $N$
\State $\mathbf c\gets
\operatorname{tok}(\operatorname{ser}_{\mathrm{in}}(\gG,\vq))
\mathbin\Vert[\mathrm{SEP}]$
\State $\mathbf C_{\gG,\vq}\gets\operatorname{OH}(\mathbf c)$
\State Sample
$\widehat{\mZ}_{t_0}\sim\mathcal N(\vzero,\mI)$ in
$\mathbb R^{M_x\times K}$
\State Set $t_k=k/N$ for $k=0,\ldots,N$
\For{$k=0,\ldots,N-1$}
    \State $\mP_k\gets
    \mP_\theta(\widehat{\mZ}_{t_k},t_k,\mathbf C_{\gG,\vq})$
    \State $\widehat{\vv}_{t_k}\gets
    (\mP_k-\widehat{\mZ}_{t_k})/(1-t_k)$
    \State $\widehat{\mZ}_{t_{k+1}}\gets
    \widehat{\mZ}_{t_k}+(t_{k+1}-t_k)\widehat{\vv}_{t_k}$
\EndFor
\State $\widehat x_i\gets
\operatorname*{arg\,max}_{a\in\mathcal V_{\mathrm{tok}}}
(\widehat{\mZ}_{t_N})_{i,a}$ for $i=1,\ldots,M_x$
\State $\widehat s_{\mathrm{out}}\gets
\operatorname{detok}(\widehat x_1,\ldots,\widehat x_{M_x})$
\State \Return $\widehat Y\gets
\operatorname{parse}_{\mathrm{path}}(\widehat s_{\mathrm{out}})$
\end{algorithmic}
\end{algorithm}

\section{Additional Related Work}
\label{app:related-work}

\paragraph{Discrete diffusion language models.}
Discrete diffusion language models have advanced through improved corruption
processes, training objectives, and sampling strategies.
D3PM \citep{austin2021structured} and SEDD \citep{lou2023discrete}
introduced discrete transition and score-based formulations, while
MDLM \citep{sahoo2024simple} simplified masked-denoising training.
LLaDA \citep{nie2026large} and Dream \citep{ye2025dream}
demonstrated competitive language modeling at scale.
Sampling strategies based on confidence or entropy guide token selection,
while ReMDM \citep{wang2026remasking} allows previously generated tokens
to be remasked and resampled.
These methods perform denoising through categorical intermediate states.

\section{Theoretical Results}
\label{app:theorem proof}
\subsection{Proof of Theorem \ref{thm:flm-candidate-superposition}}
\label{app:flm-superposition-proof}

\begin{lemma}[Instantaneous candidate-sequence superposition]
\label{lem:instantaneous-candidate-superposition}
For each candidate token sequence
$\mathbf s\in\mathcal V_{\mathrm{tok}}^{M_x}$, define
\begin{equation}
\pi_{\mathbf s}(t)
=
\prod_{j=1}^{M_x}(\mP_t)_{j,s_j}.
%\label{eq:instantaneous-candidate-weight}
\end{equation}
We call $\mathbf s$ a positive-weight candidate at time $t$ if
$\pi_{\mathbf s}(t)>0$, and define the support of the candidate-weight
distribution as
\begin{equation}
\operatorname{supp}(\pi_t)
=
\left\{
\mathbf s\in\mathcal S_{M_x}
:
\pi_{\mathbf s}(t)>0
\right\}.
% \label{eq:candidate-weight-support}
\end{equation}
Then
\begin{equation}
\pi_{\mathbf s}(t)\geq0,
\qquad
\sum_{\mathbf s\in\mathcal S_{M_x}}
\pi_{\mathbf s}(t)=1,
\label{eq:instantaneous-weights-normalized}
\end{equation}
and the following exact decomposition holds:
\begin{equation}
\mP_t
=
\sum_{\mathbf s\in\mathcal V_{\mathrm{tok}}^{M_x}}
\pi_{\mathbf s}(t)\mathbf E^{(\mathbf s)}.
% \label{eq:instantaneous-candidate-decomposition}
\end{equation}
Hence, the instantaneous clean prediction is a convex superposition of
candidate token-sequence vertices. If two positive-weight candidates differ
at one or more token positions, then $\mP_t\notin\mathcal H$ and the
superposition is nontrivial.
\end{lemma}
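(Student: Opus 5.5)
The plan is to treat the product weights as a product measure over positions and to read the decomposition one entry at a time. Nonnegativity of $\pi_{\mathbf s}(t)$ is immediate, since each $(\mP_t)_{j,\cdot}$ lies in $\Delta^{K-1}$ and so every factor is nonnegative. For normalization, I will interchange the sum over $\mathcal V_{\mathrm{tok}}^{M_x}$ with the product over positions, which is the distributive law for a finite product of finite sums:
\[
\sum_{\mathbf s}\prod_{j=1}^{M_x}(\mP_t)_{j,s_j}
=\prod_{j=1}^{M_x}\sum_{a\in\mathcal V_{\mathrm{tok}}}(\mP_t)_{j,a}=1 .
\]

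For the exact decomposition, I will compare entries $(i,a)$ on both sides. Since $\mathbf E^{(\mathbf s)}_{i,a}=\mathbf 1\{s_i=a\}$, the right-hand side at $(i,a)$ is the sum of $\pi_{\mathbf s}(t)$ over all $\mathbf s$ with $s_i=a$. Fixing $s_i=a$ leaves the factor $(\mP_t)_{i,a}$. The remaining positions $j\neq i$ are summed freely, and the same factorization shows that each of them contributes a factor equal to $1$. The entry therefore equals $(\mP_t)_{i,a}$, which gives the identity. Combined with the first two properties, this makes $\mP_t$ a convex combination of the vertices in $\mathcal H$.

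For the nontriviality claim, I will take two positive-weight candidates $\mathbf s\neq\mathbf s'$ that differ at some position $j$, say $s_j=a\neq b=s'_j$. Positivity of $\pi_{\mathbf s}(t)$ forces $(\mP_t)_{j,a}>0$, and positivity of $\pi_{\mathbf s'}(t)$ forces $(\mP_t)_{j,b}>0$. Because row $j$ sums to one, both entries are strictly below $1$. Row $j$ is therefore not a one-hot vector. Every element of $\mathcal H$ has only one-hot rows, so $\mP_t\notin\mathcal H$.

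The main obstacle is bookkeeping rather than difficulty. The one point needing care is to justify the sum–product interchange explicitly; this is valid because all index sets are finite. I also need to confirm that the marginalization over $j\neq i$ in the decomposition step reuses that same factorization.
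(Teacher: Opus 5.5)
Your proposal is correct and follows essentially the same route as the paper. It uses the finite sum--product interchange for normalization, checks the decomposition entrywise by fixing $s_i=a$ and marginalizing the other positions to $1$, and shows $\mP_t\notin\mathcal H$ from two positive entries in one row. The only cosmetic difference is that you read $(\mP_t)_{j,a}>0$ directly off a factor of $\pi_{\mathbf s}(t)$, whereas the paper derives $(\mP_t)_{j,a}\geq\pi_{\mathbf s}(t)>0$ from the decomposition; both are immediate.
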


Lemma~\ref{lem:instantaneous-candidate-superposition} is an instantaneous
geometric statement about the clean prediction $\mP_t$. The Theorem \ref{thm:flm-candidate-superposition} establishes the stronger trajectory-level result: the FLM ODE integrates
these instantaneous superpositions over time and retains them in its
persistent continuous state. We address this question using the FLM dynamics.
This appendix provides complete proofs of
Lemma~\ref{lem:instantaneous-candidate-superposition} and
Theorem~\ref{thm:flm-candidate-superposition}. We first represent complete
hard-token sequences as discrete vertices and establish the geometry of their
positive-weight superpositions. We then show that global attention provides
every answer position with access to the complete input graph. Next, we prove
that the position-wise clean prediction of FLM automatically admits a convex
decomposition over candidate token sequences. Finally, we derive the FLM ODE
and its integral representation, prove the main theorem, and compare the
resulting continuous state with the persistent hard state of absorbing
discrete diffusion.

\subsubsection{Candidate Sequences as Discrete Vertices}
\label{app:subsec:candidate-vertices}

Let the answer contain $M_x$ token positions, and let the tokenizer
vocabulary $\mathcal V_{\mathrm{tok}}$ have size $K$. A candidate token
sequence
\begin{equation*}
\mathbf s=(s_1,\ldots,s_{M_x})
\in\mathcal V_{\mathrm{tok}}^{M_x}
\end{equation*}
corresponds to the one-hot matrix
\begin{equation}
\mathbf E^{(\mathbf s)}
=
\begin{bmatrix}
\ve_{s_1}^{\top}\\
\vdots\\
\ve_{s_{M_x}}^{\top}
\end{bmatrix}
\in\{0,1\}^{M_x\times K}.
\label{eq:app-candidate-vertex}
\end{equation}
Each row of $\mathbf E^{(\mathbf s)}$ contains exactly one entry equal to one
and all remaining entries equal to zero. Define the set of complete
hard-token sequence states as
\begin{equation}
\mathcal H
=
\left\{
\mathbf E^{(\mathbf s)}:
\mathbf s\in\mathcal V_{\mathrm{tok}}^{M_x}
\right\}.
\label{eq:app-hard-sequence-set}
\end{equation}
Also define the product simplex
\begin{equation}
\mathcal Q
=
(\Delta^{K-1})^{M_x}
=
\left\{
\mathbf P\in\mathbb R^{M_x\times K}:
P_{j,a}\geq0,\quad
\sum_{a=1}^{K}P_{j,a}=1
\text{ for every }j
\right\}.
\label{eq:app-product-simplex}
\end{equation}
The set $\mathcal H$ is precisely the vertex set of $\mathcal Q$, and
\begin{equation}
\mathcal Q=\operatorname{conv}(\mathcal H).
\label{eq:app-product-simplex-hull}
\end{equation}
The proof of Lemma~\ref{lem:instantaneous-candidate-superposition} below also
proves
Eq.~\eqref{eq:app-product-simplex-hull} by giving an explicit convex
decomposition for every $\mathbf P\in\mathcal Q$. Strictly speaking,
$\mathbf E^{(\mathbf s)}$ is a vertex of the product simplex $\mathcal Q$,
rather than a vertex of the unconstrained vector space
$\mathbb R^{M_x\times K}$. A complete discrete hard-token state can take
only an element of $\mathcal H$. Following the notation of
Section~\ref{sec:problem-setup}, the FLM state
$\mZ_t\in\mathbb R^{M_x\times K}$ lies in the ambient continuous space; the
denoised component defined below lies in $\mathcal Q$.

\subsubsection{Geometry of Multi-Candidate Superposition}
\label{app:subsec:superposition-geometry}

Consider $R\geq2$ distinct candidate vertices
$\mathbf E^{(1)},\ldots,\mathbf E^{(R)}\in\mathcal H$ with strictly
positive weights
\begin{equation}
\pi_m>0,
\qquad
\sum_{m=1}^{R}\pi_m=1.
\label{eq:app-positive-candidate-weights}
\end{equation}
Define their continuous superposition as
\begin{equation}
\mathbf S(\boldsymbol\pi)
=
\sum_{m=1}^{R}\pi_m\mathbf E^{(m)}.
\label{eq:app-geometric-superposition}
\end{equation}

\begin{lemma}[Nontriviality of a multi-candidate convex combination]
\label{lem:app-geometric-superposition}
The state $\mathbf S(\boldsymbol\pi)$ belongs to the convex hull of the
candidate vertices:
\begin{equation}
\mathbf S(\boldsymbol\pi)
\in
\operatorname{conv}
\{\mathbf E^{(1)},\ldots,\mathbf E^{(R)}\}
\subseteq
\mathbb R^{M_x\times K}.
\label{eq:app-superposition-in-hull}
\end{equation}
If two positive-weight candidates differ at one or more token positions,
then
\begin{equation}
\mathbf S(\boldsymbol\pi)\notin\mathcal H.
\label{eq:app-superposition-not-hard}
\end{equation}
\end{lemma}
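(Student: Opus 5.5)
The plan is a direct two-part argument. Membership in the convex hull is immediate: by Eq.~\eqref{eq:app-positive-candidate-weights} the weights $\pi_m$ are nonnegative and sum to one, so $\mathbf S(\boldsymbol\pi)$ in Eq.~\eqref{eq:app-geometric-superposition} is by definition a convex combination of $\mathbf E^{(1)},\ldots,\mathbf E^{(R)}$. This gives Eq.~\eqref{eq:app-superposition-in-hull}, and the inclusion of the hull in $\mathbb R^{M_x\times K}$ is trivial.

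For Eq.~\eqref{eq:app-superposition-not-hard}, I will argue entrywise at a single position. Suppose two positive-weight candidates $\mathbf E^{(m_1)}$ and $\mathbf E^{(m_2)}$ differ at position $j$, with tokens $a\neq a'$ there. Then
\[
\mathbf S(\boldsymbol\pi)_{j,a}=\sum_{m:\,s^{(m)}_j=a}\pi_m\geq\pi_{m_1}>0 .
\]
The complement $\sum_{m:\,s^{(m)}_j\neq a}\pi_m$ is at least $\pi_{m_2}>0$, so $\mathbf S(\boldsymbol\pi)_{j,a}\leq 1-\pi_{m_2}<1$. Hence the entry $(j,a)$ lies strictly in $(0,1)$.

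Every element of $\mathcal H$ has entries in $\{0,1\}$, by the definition in Eq.~\eqref{eq:app-candidate-vertex}. Therefore $\mathbf S(\boldsymbol\pi)\notin\mathcal H$.

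There is no real obstacle. The only care needed is the bookkeeping that row $j$ of $\mathbf S(\boldsymbol\pi)$ equals $\sum_m\pi_m\ve_{s^{(m)}_j}^\top$, from which the entry formula above follows. As a remark, one could also note that a point of $\mathcal H$ is a vertex of $\mathcal Q$ and thus not a nontrivial convex combination, but the entrywise argument is more elementary and self-contained.
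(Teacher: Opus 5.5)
Your proof is correct and follows essentially the same route as the paper: convex-hull membership holds by definition, and non-membership in $\mathcal H$ comes from an entrywise check at a position $j$ where two positive-weight candidates disagree. The only cosmetic difference is in that check: the paper notes $\mathbf S(\boldsymbol\pi)_{j,a}\geq\pi_r>0$ and $\mathbf S(\boldsymbol\pi)_{j,b}\geq\pi_s>0$, so row $j$ is not one-hot, whereas you show the single entry $\mathbf S(\boldsymbol\pi)_{j,a}$ lies strictly in $(0,1)$; either observation rules out membership in $\mathcal H$.
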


\begin{proof}
Equation~\eqref{eq:app-superposition-in-hull} follows directly from the
definition of a convex hull. Suppose candidates $r$ and $s$ select different
tokens at position $j$:
\begin{equation*}
s_j^{(r)}=a,
\qquad
s_j^{(s)}=b,
\qquad
a\neq b,
\end{equation*}
with $\pi_r,\pi_s>0$. Then
\begin{equation}
\mathbf S(\boldsymbol\pi)_{j,a}
\geq\pi_r>0,
\qquad
\mathbf S(\boldsymbol\pi)_{j,b}
\geq\pi_s>0.
\label{eq:app-two-positive-token-coordinates}
\end{equation}
Thus, row $j$ of $\mathbf S(\boldsymbol\pi)$ contains at least two positive
coordinates and is not one-hot. Since every row of every state in
$\mathcal H$ must be one-hot,
$\mathbf S(\boldsymbol\pi)\notin\mathcal H$.
\end{proof}

Geometrically, the convex hull of two candidate vertices is a line segment,
the convex hull of three affinely independent candidate vertices is a
triangle, and additional affinely independent vertices generate a
higher-dimensional simplex. If the selected candidate vertices are affinely
independent and all their weights are strictly positive, the superposition
lies in the relative interior of that simplex. If the vertices are not
affinely independent, their positive-weight combination still lies in their
convex hull and, whenever the candidates disagree at a token position, is not
a hard vertex. It need not, however, lie in the relative interior of the
convex hull generated by every listed vertex.

\subsubsection{Accessibility of the Complete Graph}
\label{app:subsec:complete-graph-accessibility}

For a graph node $v$, let $\mathcal I_v$ denote the token positions associated
with $v$ in the conditioning sequence. Visibility of the complete graph means
that
\begin{equation}
\mathcal I_v\neq\varnothing,
\qquad
\forall v\in\gV.
\label{eq:app-node-occurrence}
\end{equation}
In attention layer $\ell$ and head $h$, let the attention weight from answer
position $\bar j$ to an unmasked position $i$ be
\begin{equation}
A_{\bar j i}^{(\ell,h)}(t)
=
\frac{
\exp(\ell_{\bar j i}^{(\ell,h)}(t))
}{
\displaystyle
\sum_{r\in\mathcal I_{\mathrm{real}}}
\exp(\ell_{\bar j r}^{(\ell,h)}(t))
}.
\label{eq:app-global-attention-weight}
\end{equation}

\begin{lemma}[Global attention covers every input node]
\label{lem:app-global-attention-coverage}
Suppose that all attention logits are finite, attention dropout is disabled at
inference time, and only padding positions are masked. Then, for every
$i\in\mathcal I_v$,
\begin{equation}
A_{\bar j i}^{(\ell,h)}(t)>0.
\label{eq:app-positive-global-attention}
\end{equation}
Consequently, every answer position has an attention route to every graph
node.
\end{lemma}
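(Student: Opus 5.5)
The statement to prove is Lemma~\ref{lem:app-global-attention-coverage}. The plan is to read positivity directly off the softmax formula in Eq.~\eqref{eq:app-global-attention-weight}, then turn it into the routing statement.

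\textbf{Step 1: the summation set.} I would first fix what the denominator sums over. Since only padding positions are masked, $\mathcal I_{\mathrm{real}}$ is exactly the set of non-padding positions. The conditioning tokens serializing the graph are real tokens, so $\mathcal I_v\subseteq\mathcal I_{\mathrm{real}}$ for every $v\in\gV$. By the visibility assumption Eq.~\eqref{eq:app-node-occurrence}, each $\mathcal I_v$ is nonempty.

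\textbf{Step 2: positivity of the weight.} Fix $i\in\mathcal I_v$. The logit $\ell_{\bar j i}^{(\ell,h)}(t)$ is finite, so the numerator $\exp(\ell_{\bar j i}^{(\ell,h)}(t))$ lies in $(0,\infty)$. The denominator is a finite sum of such terms. It contains the numerator term, since $i\in\mathcal I_{\mathrm{real}}$, and is therefore positive and finite. The ratio is then strictly positive, which is Eq.~\eqref{eq:app-positive-global-attention}.

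\textbf{Step 3: no extra zeroing at inference.} I would note that disabling attention dropout means no random multiplicative mask can zero this weight. Hence the deterministic softmax weight is the one actually used in the forward pass.

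\textbf{Step 4: from weight to route.} For the consequence, I would observe that the head output at $\bar j$ is $\sum_i A_{\bar j i}\mathbf v_i$. Every $\mathbf v_i$ with $i\in\mathcal I_v$ therefore enters this output with a positive coefficient. This holds for every node $v$, so position $\bar j$ has a direct attention route to every node of the graph.

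I would add the remark that \emph{route} is meant structurally: a positive coefficient does not preclude cancellation in downstream computation.

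There is no real obstacle here. The only point needing care is being explicit that $\mathcal I_v\subseteq\mathcal I_{\mathrm{real}}$ follows from the masking assumption.
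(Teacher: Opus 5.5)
Your proposal is correct and follows essentially the same argument as the paper: the exponential of a finite logit is strictly positive, the denominator is a finite sum of positive terms, and nonemptiness of $\mathcal I_v$ then gives a route to every node. Your extra remarks only make explicit what the paper's short proof uses implicitly, namely that $\mathcal I_v\subseteq\mathcal I_{\mathrm{real}}$ because graph tokens are not padding, and that disabled dropout leaves the softmax weight intact.
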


\begin{proof}
The exponential of every finite logit is strictly positive, and the
denominator of Eq.~\eqref{eq:app-global-attention-weight} is a finite sum of
strictly positive terms. Hence, every unmasked position receives a strictly
positive attention weight. Because $\mathcal I_v$ is nonempty for every
$v\in\gV$, each answer position has a positive-attention route to every graph
node.
\end{proof}

This lemma establishes only structural accessibility: the model can use the
complete graph to form candidate evidence. It does not guarantee that the
learned network uses every node, nor does it guarantee that a candidate token
sequence represents a valid graph path.

\subsubsection{Proof of Instantaneous Candidate-Sequence Superposition}
\label{app:subsec:candidate-convex-decomposition}

Along the FLM trajectory, let
\begin{equation}
\mP_t
=
\mP_\theta(\mZ_t,t,\gG,\vq)
\in(\Delta^{K-1})^{M_x}
\label{eq:app-clean-token-distribution}
\end{equation}
denote the position-wise clean-token distribution. For every
$\mathbf s=(s_1,\ldots,s_{M_x})\in\mathcal V_{\mathrm{tok}}^{M_x}$, define
\begin{equation}
\pi_{\mathbf s}(t)
=
\prod_{j=1}^{M_x}(\mP_t)_{j,s_j}.
\label{eq:app-candidate-product-weight}
\end{equation}

\begin{proof}[Proof of Lemma~\ref{lem:instantaneous-candidate-superposition}]
Since every row of $\mP_t$ is a probability distribution,
\begin{align}
\sum_{\mathbf s}\pi_{\mathbf s}(t)
&=
\sum_{s_1,\ldots,s_{M_x}}
\prod_{j=1}^{M_x}(\mP_t)_{j,s_j}
\notag\\
&=
\prod_{j=1}^{M_x}
\sum_{a\in\mathcal V_{\mathrm{tok}}}
(\mP_t)_{j,a}
\notag\\
&=1.
\label{eq:app-product-weights-sum}
\end{align}
Now consider coordinate $(j,a)$ on the right-hand side of
Eq.~\eqref{eq:instantaneous-candidate-decomposition}:
\begin{align}
&\sum_{\mathbf s}
\pi_{\mathbf s}(t)
(\mathbf E^{(\mathbf s)})_{j,a}
\notag\\
&\quad=
\sum_{\mathbf s:s_j=a}
\prod_{r=1}^{M_x}(\mP_t)_{r,s_r}
\notag\\
&\quad=
(\mP_t)_{j,a}
\prod_{r\neq j}
\sum_{b\in\mathcal V_{\mathrm{tok}}}
(\mP_t)_{r,b}
\notag\\
&\quad=(\mP_t)_{j,a}.
\label{eq:app-coordinate-decomposition}
\end{align}
The equality holds for every $j$ and $a$, which proves
Eq.~\eqref{eq:instantaneous-candidate-decomposition}.

If two positive-weight candidates $\mathbf s\neq\mathbf s'$ differ at
position $j$, let $s_j=a\neq b=s'_j$. The decomposition gives
\begin{equation}
(\mP_t)_{j,a}\geq\pi_{\mathbf s}(t)>0,
\qquad
(\mP_t)_{j,b}\geq\pi_{\mathbf s'}(t)>0.
\label{eq:app-instantaneous-two-token-masses}
\end{equation}
Thus, row $j$ of $\mP_t$ is not one-hot, and hence
$\mP_t\notin\mathcal H$. This proves the nontrivial-superposition statement.
\end{proof}

Equation~\eqref{eq:app-candidate-product-weight} constructs a factorized
candidate-sequence distribution induced by the position-wise token marginals.
It proves that $\mP_t$ automatically belongs to the convex hull of all
candidate token-sequence vertices. It does not imply that these sequences are
valid graph paths, nor that this factorized distribution captures the general
joint dependencies of complete paths. Because the same coordinate-wise
construction applies to every $\mathbf P\in\mathcal Q$, it also proves
Eq.~\eqref{eq:app-product-simplex-hull}.

\subsubsection{Derivation of the FLM ODE}
\label{app:subsec:flm-ode-derivation}

Training uses the linear probability path
\begin{equation}
\mZ_t
=
(1-t)\mZ_0+t\mZ_1,
\label{eq:app-linear-probability-path}
\end{equation}
where $\mZ_0$ is Gaussian noise and $\mZ_1$ is the clean one-hot
answer state. Differentiating with respect to $t$ gives
\begin{equation}
\frac{d\mZ_t}{dt}
=
\mZ_1-\mZ_0.
\label{eq:app-linear-path-derivative}
\end{equation}
Rearranging Eq.~\eqref{eq:app-linear-probability-path} yields
\begin{equation}
\mZ_0
=
\frac{\mZ_t-t\mZ_1}{1-t},
\qquad t<1.
\label{eq:app-noise-from-interpolation}
\end{equation}
Substituting Eq.~\eqref{eq:app-noise-from-interpolation} into
Eq.~\eqref{eq:app-linear-path-derivative}, we obtain
\begin{align}
\frac{d\mZ_t}{dt}
&=
\mZ_1
-
\frac{\mZ_t-t\mZ_1}{1-t}
\notag\\
&=
\frac{\mZ_1-\mZ_t}{1-t}.
\label{eq:app-clean-endpoint-velocity}
\end{align}
At sampling time, the true $\mZ_1$ is unknown. The model therefore
replaces it with the clean-state prediction $\mP_t$, defining the learned
velocity field
\begin{equation}
\mathbf F_\theta
(\mZ_t,t,\mathbf C_{\gG,\vq})
=
\frac{\mP_t-\mZ_t}{1-t},
\label{eq:app-learned-flm-velocity}
\end{equation}
and the sampling ODE
\begin{equation}
\frac{d\mZ_t}{dt}
=
\frac{\mP_t-\mZ_t}{1-t},
\qquad 0\leq t<1.
\label{eq:app-flm-sampling-ode}
\end{equation}
Under an ideal clean-state predictor, $\mP_t$ approximates
\begin{equation*}
\mathbb E[
\mZ_1
\mid
\mZ_t,t,\gG,\vq
],
\end{equation*}
so Eq.~\eqref{eq:app-learned-flm-velocity} corresponds to the conditional
mean velocity. For a learned model,
Eq.~\eqref{eq:app-learned-flm-velocity} is the continuous velocity field
defined by its clean-state predictor. Since the denominator vanishes at
$t=1$, the following dynamics and integral identities are stated for $t<1$.

\subsubsection{Integral Representation and Proof of the Main Theorem}
\label{app:subsec:flm-integral-proof}

Rewrite Eq.~\eqref{eq:app-flm-sampling-ode} as
\begin{equation}
\frac{d\mZ_t}{dt}
+
\frac{\mZ_t}{1-t}
=
\frac{\mP_t}{1-t}.
\label{eq:app-linear-ode-form}
\end{equation}
This is a first-order linear identity along the realized trajectory. Although
$\mP_t$ depends on $\mZ_t$, the following integral identity remains
exact along every solution of the ODE. The integrating factor is
\begin{equation}
\mu(t)
=
\exp\left(\int\frac{1}{1-t}\,dt\right)
=
\frac{1}{1-t}.
\label{eq:app-integrating-factor}
\end{equation}
Multiplying Eq.~\eqref{eq:app-linear-ode-form} by $\mu(t)$ gives
\begin{equation}
\frac{1}{1-t}\frac{d\mZ_t}{dt}
+
\frac{\mZ_t}{(1-t)^2}
=
\frac{\mP_t}{(1-t)^2}.
\label{eq:app-scaled-linear-ode}
\end{equation}
By the product rule,
\begin{equation*}
\frac{d}{dt}
\left(\frac{\mZ_t}{1-t}\right)
=
\frac{1}{1-t}\frac{d\mZ_t}{dt}
+
\frac{\mZ_t}{(1-t)^2}.
\end{equation*}
Therefore,
\begin{equation}
\frac{d}{dt}
\left(\frac{\mZ_t}{1-t}\right)
=
\frac{\mP_t}{(1-t)^2}.
\label{eq:app-total-derivative-form}
\end{equation}
Integrating from $0$ to $t$ and applying the fundamental theorem of calculus
gives
\begin{equation}
\frac{\mZ_t}{1-t}
-\mZ_0
=
\int_0^t
\frac{\mP_u}{(1-u)^2}\,du.
\label{eq:app-integrated-total-derivative}
\end{equation}
Multiplying by $1-t$ yields
\begin{equation}
\mZ_t
=
(1-t)\mZ_0
+
(1-t)\int_0^t
\frac{\mP_u}{(1-u)^2}\,du.
\label{eq:app-flm-integral-representation}
\end{equation}
Equation~\eqref{eq:app-flm-integral-representation} decomposes the current
state into a residual contribution from the initial noise and an accumulated
contribution from the preceding clean predictions. Even though $\mP_u$
depends on $\mZ_u$ through the neural network,
Eq.~\eqref{eq:app-flm-integral-representation} is an exact integral identity
along the solution trajectory.

\begin{proof}[Proof of Theorem~\ref{thm:flm-candidate-superposition}]
Apply Lemma~\ref{lem:instantaneous-candidate-superposition} at time $u$ and
substitute its decomposition
\begin{equation}
\mP_u
=
\sum_{\mathbf s}
\pi_{\mathbf s}(u)\mathbf E^{(\mathbf s)},
\label{eq:app-timewise-candidate-decomposition}
\end{equation}
into Eq.~\eqref{eq:app-flm-integral-representation}:
\begin{align}
\mZ_t
&=(1-t)\mZ_0
\notag\\
&\quad+
\sum_{\mathbf s}
\left[
(1-t)\int_0^t
\frac{\pi_{\mathbf s}(u)}{(1-u)^2}\,du
\right]
\mathbf E^{(\mathbf s)}.
\label{eq:app-expanded-integral-representation}
\end{align}
Define
\begin{equation}
\lambda_{\mathbf s}(t)
=
(1-t)\int_0^t
\frac{\pi_{\mathbf s}(u)}{(1-u)^2}\,du.
\label{eq:app-integrated-candidate-weight}
\end{equation}
Because $\pi_{\mathbf s}(u)\geq0$ and the integral kernel is strictly
positive, $\lambda_{\mathbf s}(t)\geq0$. Furthermore, since
$\sum_{\mathbf s}\pi_{\mathbf s}(u)=1$,
\begin{align}
\sum_{\mathbf s}\lambda_{\mathbf s}(t)
&=
(1-t)\int_0^t\frac{1}{(1-u)^2}\,du
\notag\\
&=
(1-t)\left(\frac{1}{1-t}-1\right)
\notag\\
&=t.
\label{eq:app-integrated-weights-sum}
\end{align}
For $t>0$, define
\begin{equation}
\overline\pi_{\mathbf s}(t)
=
\frac{\lambda_{\mathbf s}(t)}{t}.
\label{eq:app-normalized-integrated-weight}
\end{equation}
Then $\overline\pi_{\mathbf s}(t)\geq0$ and
$\sum_{\mathbf s}\overline\pi_{\mathbf s}(t)=1$. Hence,
\begin{equation}
\mZ_t
=
(1-t)\mZ_0
+t\sum_{\mathbf s}
\overline\pi_{\mathbf s}(t)
\mathbf E^{(\mathbf s)}.
\label{eq:app-normalized-state-decomposition}
\end{equation}
Defining
\begin{equation}
\overline{\mathbf E}_t
=
\sum_{\mathbf s}
\overline\pi_{\mathbf s}(t)
\mathbf E^{(\mathbf s)}
\label{eq:app-denoised-superposition}
\end{equation}
gives the decomposition stated in the theorem.

Suppose that two distinct candidates $\mathbf s\neq\mathbf s'$ have positive
weights on subsets of time of positive measure. Because the integral kernel
is strictly positive,
\begin{equation}
\overline\pi_{\mathbf s}(t)>0,
\qquad
\overline\pi_{\mathbf s'}(t)>0.
\label{eq:app-two-positive-integrated-weights}
\end{equation}
Distinct sequences differ at some position $j$. Let
$s_j=a\neq b=s_j'$. Then
\begin{equation}
(\overline{\mathbf E}_t)_{j,a}
\geq\overline\pi_{\mathbf s}(t)>0,
\qquad
(\overline{\mathbf E}_t)_{j,b}
\geq\overline\pi_{\mathbf s'}(t)>0.
\label{eq:app-superposition-two-token-masses}
\end{equation}
Thus, row $j$ of $\overline{\mathbf E}_t$ is not one-hot. By
Lemma~\ref{lem:app-geometric-superposition},
\begin{equation}
\overline{\mathbf E}_t\notin\mathcal H.
\label{eq:app-denoised-component-not-hard}
\end{equation}
Therefore, contributions from multiple candidate sequences are accumulated
through the same FLM ODE trajectory into a nontrivial geometric
superposition.
\end{proof}

\subsection{Proof of Theorem~\ref{thm:history-mechanism-full-influence}}
\label{app:history-mechanism}

\begin{proof}
Throughout, the context, time \(t\), model parameters, and attention
mask are fixed. All derivatives are with respect to \(\lambda\).
We suppress the dependence on \(\lambda\) where convenient.

\medskip
\noindent\emph{Hard-state preservation.}
The intervention modifies only coordinate \((j,v)\). Since \(v\)
is a strict non-top-1 candidate and \(0\leq\lambda\leq h_t\),
\[
(\mathbf U_t(\lambda))_{j,v}
=(\mZ_t)_{j,v}-(h_t-\lambda)
\leq(\mZ_t)_{j,v}
<(\mZ_t)_{j,x_j}.
\]
Every other coordinate remains unchanged. Hence the set of maximizing
coordinates in each row is unchanged, and fixed tie-breaking gives
\[
\Pi(\mathbf U_t(\lambda))=\Pi(\mZ_t)=\mathbf x.
\]
This proves the first equality in
Equation~\ref{eq:full-history-hard-preservation}.

\medskip
\noindent\emph{Exact derivative decomposition.}
For head \(b\) and receiving position \(i\), let
\(\mathcal I_i^b\) be the fixed set of visible positions. Define
\[
s_{im}^b
=\frac{\langle\mathbf q_i^b,\mathbf k_m^b\rangle}{\sqrt{d_b}},
\qquad
a_{im}^b
=\frac{\exp(s_{im}^b)}
{\sum_{\ell\in\mathcal I_i^b}\exp(s_{i\ell}^b)},
\qquad m\in\mathcal I_i^b.
\]
The product rule yields
\[
\dot s_{im}^b
=
\frac{\langle\dot{\mathbf q}_i^b,\mathbf k_m^b\rangle}
{\sqrt{d_b}}
+
\frac{\langle\mathbf q_i^b,\dot{\mathbf k}_m^b\rangle}
{\sqrt{d_b}}
=
\rho_{im}^{Q,b}+\rho_{im}^{K,b}.
\]
Differentiating the softmax gives
\begin{equation}
\dot a_{im}^b
=
a_{im}^b
\left(
\dot s_{im}^b
-\sum_{\ell\in\mathcal I_i^b}
a_{i\ell}^b\dot s_{i\ell}^b
\right).
\label{eq:app-history-softmax-derivative}
\end{equation}

Since
\(\mathbf o_i^b=\sum_{m\in\mathcal I_i^b}
a_{im}^b\mathbf v_m^b\),
\begin{equation}
\dot{\mathbf o}_i^b
=
\sum_{m\in\mathcal I_i^b}
\dot a_{im}^b\mathbf v_m^b
+
\sum_{m\in\mathcal I_i^b}
a_{im}^b\dot{\mathbf v}_m^b.
\label{eq:app-history-output-derivative}
\end{equation}
By the complete interface representation
\(M_t^{(\lambda)}=F_t(O(\lambda),\xi(\lambda))\),
the chain rule gives
\begin{equation}
\partial_\lambda M_t^{(\lambda)}
=
\sum_{b,i}
\langle\mathbf g_i^b,\dot{\mathbf o}_i^b\rangle
+
\langle\nabla_\xi F_t,\dot\xi\rangle.
\label{eq:app-history-chain-rule}
\end{equation}
Here all gradients are evaluated at the current intervened state,
and include the dependence through all subsequent model layers.

Using \(\eta_{im}^b
=\langle\mathbf g_i^b,\mathbf v_m^b\rangle\),
the attention-weight contribution at each head and position is
\[
\begin{aligned}
\sum_{m\in\mathcal I_i^b}\dot a_{im}^b\eta_{im}^b
&=
\mathbb E_{m\sim a_i^b}[\dot s_{im}^b\eta_{im}^b]
-
\mathbb E_{m\sim a_i^b}[\dot s_{im}^b]\,
\mathbb E_{m\sim a_i^b}[\eta_{im}^b]\\
&=
\operatorname{Cov}_{m\sim a_i^b}
(\rho_{im}^{Q,b},\eta_{im}^b)
+
\operatorname{Cov}_{m\sim a_i^b}
(\rho_{im}^{K,b},\eta_{im}^b).
\end{aligned}
\]
The value contribution is
\[
\left\langle
\mathbf g_i^b,\sum_{m\in\mathcal I_i^b}
a_{im}^b\dot{\mathbf v}_m^b
\right\rangle
=
\mathbb E_{m\sim a_i^b}
\langle\mathbf g_i^b,\dot{\mathbf v}_m^b\rangle.
\]
Substituting these identities into
Equation~\ref{eq:app-history-chain-rule} and summing over all heads
and receiving positions gives
\[
\partial_\lambda M_t^{(\lambda)}
=
\mathcal C_Q+\mathcal C_K+\mathcal C_V+\mathcal C_R
=\mathcal C(\lambda),
\]
with the contributions stated in
Equation~\ref{eq:full-history-path-contributions}.
This proves the second equality in
Equation~\ref{eq:full-history-hard-preservation}.
No sign restriction on the individual contributions is required.

\medskip
\noindent\emph{Cross-position influence.}
Suppose the maps preceding attention act independently at each
position. Because the intervention changes only position \(j\),
for \(i\neq j\) we have
\[
\dot{\mathbf q}_i^b=0,
\qquad
\dot{\mathbf k}_m^b=\dot{\mathbf v}_m^b=0
\quad\text{for }m\neq j.
\]
For \(j\in\mathcal I_i^b\), it follows that
\[
\dot s_{im}^b
=\mathbf 1\{m=j\}\rho_{ij}^{K,b}.
\]
Equation~\ref{eq:app-history-softmax-derivative} therefore becomes
\[
\dot a_{im}^b
=
a_{im}^b
\bigl(\mathbf 1\{m=j\}-a_{ij}^b\bigr)
\rho_{ij}^{K,b}.
\]
Substitution into
Equation~\ref{eq:app-history-output-derivative} yields
\[
\begin{aligned}
\dot{\mathbf o}_i^b
&=
a_{ij}^b\mathbf v_j^b\rho_{ij}^{K,b}
-
a_{ij}^b\rho_{ij}^{K,b}
\sum_{m\in\mathcal I_i^b}a_{im}^b\mathbf v_m^b
+
a_{ij}^b\dot{\mathbf v}_j^b\\
&=
a_{ij}^b(\mathbf v_j^b-\mathbf o_i^b)\rho_{ij}^{K,b}
+
a_{ij}^b\dot{\mathbf v}_j^b,
\end{aligned}
\]
which is Equation~\ref{eq:history-cross-position}.

\medskip
\noindent\emph{Local sensitivity and the integrated response.}
The smoothness assumption implies that
\(\lambda\mapsto M_t^{(\lambda)}\) is \(C^1\), with continuous
derivative \(\mathcal C(\lambda)\).
Thus, by the fundamental theorem of calculus,
\[
M_t^{(\lambda_1)}-M_t^{(\lambda_0)}
=
\int_{\lambda_0}^{\lambda_1}\mathcal C(s)\,ds
\qquad
(\lambda_0,\lambda_1\in[0,h_t]).
\]
Taking \(\lambda_0=0\) and \(\lambda_1=h_t\) proves
Equation~\ref{eq:full-history-integrated-effect}.

If \(\mathcal C(\lambda_*)\neq0\), continuity ensures that
\(\mathcal C\) has the same sign as \(\mathcal C(\lambda_*)\)
and magnitude at least
\(\lvert\mathcal C(\lambda_*)\rvert/2\)
in a sufficiently small neighborhood of \(\lambda_*\).
Consequently, every sufficiently small nonzero \(\delta\) with
\(\lambda_*+\delta\in[0,h_t]\) satisfies
\[
\left|
M_t^{(\lambda_*+\delta)}-M_t^{(\lambda_*)}
\right|
=
\left|
\int_{\lambda_*}^{\lambda_*+\delta}\mathcal C(s)\,ds
\right|
\geq
\frac{|\mathcal C(\lambda_*)|}{2}|\delta|
>0.
\]
The argument also covers one-sided admissible perturbations at
the endpoints. Together with hard-state preservation, this proves
the claimed local prediction sensitivity at fixed hard state.

\medskip
\noindent\emph{Hard-state approximation error and tightness.}
Continuity on the compact interval \([0,h_t]\) ensures that
\[
M_{\min}:=\min_{\lambda\in[0,h_t]}M_t^{(\lambda)},
\qquad
M_{\max}:=\max_{\lambda\in[0,h_t]}M_t^{(\lambda)}
\]
are attained, with \(\Omega_t=M_{\max}-M_{\min}\).
Since \(h_t>0\) and \(M_t^{(\lambda)}\) is \(C^1\), it is constant
on this interval if and only if
\(\mathcal C\equiv0\). Hence
\[
\Omega_t>0
\quad\Longleftrightarrow\quad
\mathcal C\not\equiv0.
\]

For any scalar readout \(H\) using only context, time, and the
hard state, its input is constant along the intervention path.
Write
\[
c_H:=H(\mathbf C_{\gG,\vq},t,\mathbf x).
\]
The triangle inequality gives
\[
\begin{aligned}
\Omega_t
&=|M_{\max}-M_{\min}|\\
&\leq |M_{\max}-c_H|+|M_{\min}-c_H|\\
&\leq
2\sup_{\lambda\in[0,h_t]}
|M_t^{(\lambda)}-c_H|.
\end{aligned}
\]
Moreover, both endpoint values lie in
\([M_{\min},M_{\max}]\), so
\[
|\Delta_t|
=|M_t^{(h_t)}-M_t^{(0)}|
\leq \Omega_t.
\]
Combining these inequalities establishes
Equation~\ref{eq:history-mechanism-hard-error}.

Finally, for this fixed intervention path, choose the constant
readout
\[
H_*(\cdot)=\frac{M_{\max}+M_{\min}}{2}.
\]
Every path value lies between \(M_{\min}\) and \(M_{\max}\),
and both extrema are attained. Therefore,
\[
\sup_{\lambda\in[0,h_t]}
\left|
M_t^{(\lambda)}
-H_*(\mathbf C_{\gG,\vq},t,\mathbf x)
\right|
=\frac{M_{\max}-M_{\min}}{2}
=\frac{\Omega_t}{2}.
\]
Thus the first lower bound is tight.
\end{proof}

\subsection{Historical persistence and continuous refinement}
\label{app:history-refinement}

We use the notation of
Theorem~\ref{thm:history-mechanism-full-influence}, with
$\mP_t=\mP_t^{(h_t)}$ and $\Delta_t=M_t-M_t^{(0)}$.
Let $\mathcal F$ denote the free positions.
Each row of $\mP_t^{(\lambda)}$ is a probability vector,
and conditioning coordinates remain fixed throughout the update.

\begin{corollary}[Historical persistence and continuous refinement]
\label{cor:history-mechanism-next-update}
\label{cor:history-prediction-transmission}
\label{cor:history-proposal-transmission}
For a fixed coefficient $\alpha_t\in[0,1]$, define
\[
\mathbf U_t^+(\lambda)
=(1-\alpha_t)\mathbf U_t(\lambda)
+\alpha_t\mP_t^{(\lambda)}
\]
on free coordinates, and let
\[
d_t^+(\lambda)
=(\mathbf U_t^+(\lambda))_{j,v}
-(\mathbf U_t^+(\lambda))_{j,x_j}.
\]
Then
\begin{equation}
\mathbf U_t^+(h_t)-\mathbf U_t^+(0)
=(1-\alpha_t)h_t\mathbf D^{jv}
+\alpha_t(\mP_t-\mP_t^{(0)})
\label{eq:full-history-distinct-base-transmission}
\end{equation}
on free coordinates. Consequently,
\begin{equation}
d_t^+(h_t)-d_t^+(0)
=\underbrace{(1-\alpha_t)h_t}_{\text{direct persistence}}
+\underbrace{\alpha_t\Delta_t}_{\text{predictor response}}.
\label{eq:history-mechanism-update}
\end{equation}

Moreover, suppose the minimum winner gap is
\[
\gamma_t
:=\min_{\substack{i\in\mathcal F\\ b\neq x_i}}
\bigl((\mZ_t)_{i,x_i}-(\mZ_t)_{i,b}\bigr)>0
\]
and $0<\alpha_t<\gamma_t/(1+\gamma_t)$. Then
\begin{equation}
\Pi(\mathbf U_t^+(\lambda))=\mathbf x
\qquad (0\leq\lambda\leq h_t).
\label{eq:history-next-hard-invariance}
\end{equation}
Under the same winner-gap and step-size conditions, the natural
update for
$d_t=(\mZ_t)_{j,v}-(\mZ_t)_{j,x_j}<0$ satisfies
\begin{equation}
\begin{gathered}
d_t^+(h_t)-d_t=\alpha_t(M_t-d_t),\\
M_t>d_t
\quad\Longleftrightarrow\quad
d_t<d_t^+(h_t)<0.
\end{gathered}
\label{eq:history-temporal-refinement}
\end{equation}
\end{corollary}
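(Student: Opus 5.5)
The plan is to prove each claim by expanding the affine definition of $\mathbf U_t^+(\lambda)$ and then bounding a single row gap. No properties of the network are needed beyond each row of $\mP_t^{(\lambda)}$ being a probability vector.

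\emph{Transmission identities.} By definition, $\mathbf U_t(h_t)=\mZ_t$ and $\mathbf U_t(0)=\mZ_t-h_t\mathbf D^{jv}$, so $\mathbf U_t(h_t)-\mathbf U_t(0)=h_t\mathbf D^{jv}$. Also $\mP_t^{(h_t)}=\mP_t$. Subtracting the two convex combinations defining $\mathbf U_t^+(h_t)$ and $\mathbf U_t^+(0)$ on free coordinates gives Equation~\ref{eq:full-history-distinct-base-transmission} directly. To obtain Equation~\ref{eq:history-mechanism-update}, I will read off the $(j,v)$ entry minus the $(j,x_j)$ entry of that difference:
\begin{itemize}
\item $\mathbf D^{jv}$ contributes $1$ at $(j,v)$ and $0$ at $(j,x_j)$, because $v\neq x_j$ ($v$ is a strict non-top-1 candidate);
\item the prediction term contributes $\alpha_t(M_t^{(h_t)}-M_t^{(0)})=\alpha_t\Delta_t$, by the definition of $M_t^{(\lambda)}$ in Theorem~\ref{thm:history-mechanism-full-influence}.
\end{itemize}

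\emph{Hard-state invariance.} This is the step requiring care. Fix a free row $i$ and a token $b\neq x_i$. I will write
\[
(\mathbf U_t^+)_{i,x_i}-(\mathbf U_t^+)_{i,b}=(1-\alpha_t)\bigl((\mathbf U_t)_{i,x_i}-(\mathbf U_t)_{i,b}\bigr)+\alpha_t\bigl((\mP_t^{(\lambda)})_{i,x_i}-(\mP_t^{(\lambda)})_{i,b}\bigr).
\]
For the first bracket, $\mathbf U_t(\lambda)$ differs from $\mZ_t$ only by lowering coordinate $(j,v)$, and $x_j\neq v$, so every gap $(\mathbf U_t)_{i,x_i}-(\mathbf U_t)_{i,b}$ is at least the corresponding gap of $\mZ_t$, which is at least $\gamma_t$. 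For the second bracket, rows of $\mP_t^{(\lambda)}$ lie in the simplex, so the bracket is at least $-1$. Hence the whole gap is at least $(1-\alpha_t)\gamma_t-\alpha_t$, which is strictly positive exactly when $\alpha_t<\gamma_t/(1+\gamma_t)$. Because the inequality is strict, $x_i$ is the unique maximizer in every free row, so tie-breaking plays no role. Conditioning rows are unchanged, which gives Equation~\ref{eq:history-next-hard-invariance}. The points I expect to need care are exactly these: the intervention can only widen winner gaps, and the prediction term is bounded by $1$ using the simplex constraint alone.

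\emph{Temporal refinement.} Since $\mathbf U_t(h_t)=\mZ_t$ and $\mP_t^{(h_t)}=\mP_t$, we have $d_t^+(h_t)=(1-\alpha_t)d_t+\alpha_t M_t$, and therefore $d_t^+(h_t)-d_t=\alpha_t(M_t-d_t)$. Because $\alpha_t>0$, this gives $M_t>d_t\iff d_t^+(h_t)>d_t$. Applying the invariance just proved at $\lambda=h_t$ in row $j$ with $b=v$ gives $d_t^+(h_t)<0$ whenever the hypotheses hold. Combining the two yields the forward implication $M_t>d_t\Rightarrow d_t<d_t^+(h_t)<0$. The reverse implication follows immediately from the displayed identity, which completes Equation~\ref{eq:history-temporal-refinement}.
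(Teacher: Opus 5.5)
Your proposal is correct and follows the paper's proof essentially step for step. You use the same subtraction of the two convex combinations via $\mathbf U_t(h_t)-\mathbf U_t(0)=h_t\mathbf D^{jv}$, and the same row-gap bound $(1-\alpha_t)\gamma_t-\alpha_t>0$, which comes from the intervention only lowering a non-winning coordinate together with the simplex bound $\geq -1$ on the predictor term. You then combine the same identity $d_t^+(h_t)=(1-\alpha_t)d_t+\alpha_t M_t$ with $\alpha_t>0$ and the strict negativity of $d_t^+(h_t)$, which you justify explicitly through the strict gap bound where the paper simply cites ``hard-state preservation.''
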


\begin{proof}
By the intervention definition,
\[
\mathbf U_t(h_t)=\mZ_t,
\qquad
\mathbf U_t(0)=\mZ_t-h_t\mathbf D^{jv}.
\]
Subtracting the two Euler updates on free coordinates gives
\[
\begin{aligned}
\mathbf U_t^+(h_t)-\mathbf U_t^+(0)
&=(1-\alpha_t)
  \bigl(\mathbf U_t(h_t)-\mathbf U_t(0)\bigr)
  +\alpha_t(\mP_t-\mP_t^{(0)})\\
&=(1-\alpha_t)h_t\mathbf D^{jv}
  +\alpha_t(\mP_t-\mP_t^{(0)}),
\end{aligned}
\]
proving Equation~\ref{eq:full-history-distinct-base-transmission}.
Since $v\neq x_j$, the matrix $\mathbf D^{jv}$ has entry $1$
at $(j,v)$ and $0$ at $(j,x_j)$. Taking the difference between
these two coordinates yields
\[
\begin{aligned}
d_t^+(h_t)-d_t^+(0)
&=(1-\alpha_t)h_t
  +\alpha_t\bigl(M_t-M_t^{(0)}\bigr)\\
&=(1-\alpha_t)h_t+\alpha_t\Delta_t,
\end{aligned}
\]
which proves Equation~\ref{eq:history-mechanism-update}.

Next, the intervention only decreases a nonwinning coordinate.
Thus, for every $i\in\mathcal F$, $b\neq x_i$, and
$\lambda\in[0,h_t]$,
\[
\begin{aligned}
(\mathbf U_t(\lambda))_{i,x_i}
-(\mathbf U_t(\lambda))_{i,b}
&=(\mZ_t)_{i,x_i}-(\mZ_t)_{i,b}
  +(h_t-\lambda)\mathbf 1\{i=j,\ b=v\}\\
&\geq\gamma_t.
\end{aligned}
\]
Because predictor rows are probability vectors,
\[
(\mP_t^{(\lambda)})_{i,x_i}
-(\mP_t^{(\lambda)})_{i,b}\geq-1.
\]
Combining these inequalities gives
\[
\begin{aligned}
(\mathbf U_t^+(\lambda))_{i,x_i}
-(\mathbf U_t^+(\lambda))_{i,b}
&\geq(1-\alpha_t)\gamma_t-\alpha_t\\
&=\gamma_t-\alpha_t(1+\gamma_t)>0.
\end{aligned}
\]
Hence $x_i$ remains the unique winner at every free position.
Conditioning coordinates are unchanged, so
$\Pi(\mathbf U_t^+(\lambda))=\mathbf x$ throughout the path,
establishing Equation~\ref{eq:history-next-hard-invariance}.

Finally, $\lambda=h_t$ corresponds to the original,
unintervened state. Its updated margin is
\[
d_t^+(h_t)=(1-\alpha_t)d_t+\alpha_t M_t,
\]
and therefore
\[
d_t^+(h_t)-d_t=\alpha_t(M_t-d_t).
\]
Under the stated conditions, $\alpha_t>0$ and
$d_t^+(h_t)<0$ by hard-state preservation. Consequently,
\[
M_t>d_t
\quad\Longleftrightarrow\quad
d_t^+(h_t)>d_t
\quad\Longleftrightarrow\quad
d_t<d_t^+(h_t)<0,
\]
proving Equation~\ref{eq:history-temporal-refinement}.
\end{proof}

Equation~\ref{eq:history-mechanism-update} compares the updated
states with and without historical support, separating direct
persistence from the model's prediction response.
Equation~\ref{eq:history-temporal-refinement} instead compares the
original trajectory before and after one update: under the stated
conditions, $M_t>d_t$ precisely means that the candidate's gap to
the current winner shrinks while the hard-token sequence remains
unchanged.

\subsection{Proof of Theorem~\ref{thm:flm-decoder-accessibility}}
\label{app:flm-decoder-proof}

We use the representations defined in the main text:
\[
\begin{aligned}
 A_t&=(\mathbf C_{\mathcal N},(\mZ_t)_{\mathcal N}),
 &B_t&=(\mathbf C_{\mathcal N},\mathsf H((\mZ_t)_{\mathcal N})),\\
 A_t^{\mathrm{soft}}&=(\mathbf C_{\mathcal N},\mathbf W_t),
 &A_t^{\mathrm{conf}}&=(B_t,\max\mathbf W_t),
\end{aligned}
\]
where $\mathsf H=\operatorname{OH}\circ\Pi$,
$\mathbf W_t=\operatorname{softmax}((\mZ_t)_{\mathcal N})$, and all
operations are rowwise. The receptive field $\mathcal N$ contains $J$
as an identifiable center, and $Y=\mathbf 1\{X_J=v_{\mathrm{path}}\}$.
Fix the trained decoders. All risks and expectations are taken over the
same independent test distribution of puzzles, trajectory randomness,
and uniformly sampled output positions. All logarithms are natural, and
BCE is unweighted.

\begin{proof}
\textbf{Step 1: Decompose the decoder risks.}
For any representation $T$, let $p_T=\Pr(Y=1\mid T)$ and let $f(T)$
be a decoder's predicted probability. Conditioning the BCE loss on $T$
gives
\begin{equation}
\begin{aligned}
 R(f;T)
 &=\mathbb E[-Y\log f(T)-(1-Y)\log(1-f(T))]\\
 &=H(Y\mid T)+\mathbb E\operatorname{KL}
   \bigl(\operatorname{Ber}(p_T)\,\|\,
         \operatorname{Ber}(f(T))\bigr).
\end{aligned}
\label{eq:decoder-excess-risk-identity}
\end{equation}
Thus $\varepsilon_t^Z,\varepsilon_t^{\mathrm{hard}}\geq0$.
Since $B_t$ is a deterministic function of $A_t$, the Bayes-risk gap
$\mathcal I_t:=H(Y\mid B_t)-H(Y\mid A_t)$ is nonnegative, and
\begin{equation}
 \Delta_t^{\mathrm{dec}}
 =\mathcal I_t+\varepsilon_t^{\mathrm{hard}}-\varepsilon_t^Z.
\label{eq:decoder-gap-decomposition}
\end{equation}

\medskip
\noindent\textbf{Step 2: Lower-bound the signal lost under hard projection.}
Softmax preserves rowwise argmax, including ties under the fixed rule.
Hence the representations generate nested sigma-algebras:
\[
 \sigma(B_t)\subseteq\sigma(A_t^{\mathrm{conf}})
 \subseteq\sigma(A_t^{\mathrm{soft}})\subseteq\sigma(A_t).
\]
Set
$p_s=\Pr(Y=1\mid A_t^{\mathrm{soft}})$ and
$p_c=\Pr(Y=1\mid A_t^{\mathrm{conf}})$.
The tower property gives
$p_c=\mathbb E[p_s\mid A_t^{\mathrm{conf}}]$, so
\begin{equation}
 \nu_t=\mathbb E[(p_s-p_c)^2].
\label{eq:decoder-conditional-variance-identity}
\end{equation}
Monotonicity of conditional entropy and the conditional Bernoulli KL
identity yield
\begin{align}
 \mathcal I_t
 &\geq H(Y\mid A_t^{\mathrm{conf}})
        -H(Y\mid A_t^{\mathrm{soft}})\nonumber\\
 &=\mathbb E\operatorname{KL}
   \bigl(\operatorname{Ber}(p_s)\,\|\,
         \operatorname{Ber}(p_c)\bigr)\nonumber\\
 &\geq2\mathbb E[(p_s-p_c)^2]=2\nu_t.
\label{eq:decoder-confidence-signal-bound}
\end{align}
The last inequality follows from
$\operatorname{KL}(\operatorname{Ber}(p)\|\operatorname{Ber}(q))
\geq2(p-q)^2$ in natural logarithms. Indeed, for $q\in(0,1)$, the KL
as a function of $p$ has value and first derivative zero at $p=q$, and
second derivative $1/[p(1-p)]\geq4$; boundary cases follow by limits.
Combining~\eqref{eq:decoder-confidence-signal-bound}
with~\eqref{eq:decoder-gap-decomposition} gives
\[
 \Delta_t^{\mathrm{dec}}
 \geq2\nu_t+\varepsilon_t^{\mathrm{hard}}-\varepsilon_t^Z
 \geq2\nu_t-\varepsilon_t^Z.
\]

\medskip
\noindent\textbf{Step 3: Upper-bound the remaining benefit by state error.}
Let $\widehat X_J=[\Pi(\mZ_t)]_J$ and
$\widehat Y=\mathbf 1\{\widehat X_J=v_{\mathrm{path}}\}$.
Since the center $J$ is identifiable, $\widehat Y$ is measurable from $B_t$.
A binary-label error implies a token error. On the event
$\widehat X_J\neq X_J$, write $\mathbf z=(\mZ_t)_{J,:}$ and
$w=\widehat X_J$. Then $z_w\geq z_{X_J}$, and
\begin{equation}
\begin{aligned}
 \|\mathbf z-\mathbf e_{X_J}^{\top}\|_2^2
 &\geq(z_{X_J}-1)^2+z_w^2\\
 &\geq\tfrac12(1-z_{X_J}+z_w)^2
 \geq\tfrac12.
\end{aligned}
\label{eq:decoder-token-error-distance}
\end{equation}
Therefore
\begin{equation}
\begin{aligned}
 q_t:=\Pr(\widehat Y\neq Y)
 &\leq\Pr(\widehat X_J\neq X_J)\\
 &\leq2\mathbb E\|(\mZ_t)_{J,:}
                   -\mathbf e_{X_J}^{\top}\|_2^2
 \leq2b_t^2.
\end{aligned}
\label{eq:decoder-hard-error-bound}
\end{equation}
Let $E_t=\mathbf 1\{\widehat Y\neq Y\}$. Conditional on $B_t$,
$Y$ and $E_t$ determine one another because $\widehat Y$ is known.
Consequently,
\[
 H(Y\mid B_t)=H(E_t\mid B_t)
 \leq H(E_t)=h_{\mathrm{bin}}(q_t).
\]
If $2b_t^2\leq1/2$, use the monotonicity of $h_{\mathrm{bin}}$ on
$[0,1/2]$; otherwise use $H(Y\mid B_t)\leq\log2$.
In both cases,
\begin{equation}
 0\leq H(Y\mid A_t)\leq H(Y\mid B_t)
 \leq\chi_t:=h_{\mathrm{bin}}(\min\{1/2,2b_t^2\}).
\label{eq:decoder-state-error-entropy-bound}
\end{equation}
Using~\eqref{eq:decoder-gap-decomposition} and nonnegativity of
$H(Y\mid A_t)$ and $\varepsilon_t^Z$, we conclude that
\[
 \Delta_t^{\mathrm{dec}}
 \leq H(Y\mid B_t)+\varepsilon_t^{\mathrm{hard}}
 \leq\chi_t+\varepsilon_t^{\mathrm{hard}}.
\]
Finally,
\[
 R_t^Z=H(Y\mid A_t)+\varepsilon_t^Z
 \leq\chi_t+\varepsilon_t^Z.
\]
Together with Step 2, these are all the asserted bounds.
\end{proof}

The proof does not assume that either trained decoder attains its Bayes
risk. Matching architecture and training budget therefore does not remove
the excess-risk terms.

\section{Experiments}
\label{app:experiments}

% Requires \usepackage{booktabs} and \usepackage{url}.
\subsection{Dataset Construction and Processing}
\label{app:dataset-details}

\subsubsection{Maze Datasets}
\label{app:maze-datasets}

Each of Maze10, Maze15, and Maze20 contains 10,000 training examples, and 1,000 test examples.
The corresponding logical lattices are $10\times10$, $15\times15$,
and $20\times20$, respectively.
We follow the maze-generation setup of VFScale~\citep{zhang2026vfscale}.

For each size, we generate 12,000 candidate mazes with seed 42 and
default generator parameters. Each maze is a tree spanning all lattice
cells. We sample two distinct nodes uniformly without replacement as
the start and goal, and obtain their unique path with the library's
A* solver. We retain paths with at least five logical nodes, equivalent
to four moves, without an upper length limit.
We deduplicate by binary maze layout, excluding endpoint annotations,
then shuffle with NumPy's \texttt{default\_rng} using seed 123.
The first 10,000 retained examples form the training set, followed by 1,000 for testing; remaining examples are unused.
Each retained layout has one start--goal pair, and no data augmentation
is applied. The fixed splits have no overlapping layouts.

An $n\times n$ logical maze is rasterized into a
$(2n+1)\times(2n+1)$ board. Input symbols \texttt{0}, \texttt{1},
\texttt{2}, and \texttt{3} denote traversable pixels, walls, the start,
and the goal, respectively. The target preserves the board and endpoints
and marks intermediate solution pixels with \texttt{4}.
Both boards are flattened in row-major order and serialized as
\texttt{[input][SEP][output][EOS]} with character-level tokenization.
The resulting length is $2(2n+1)^2+2$ tokens before padding.
Table~\ref{tab:maze-statistics} summarizes the representation sizes and
test-set path lengths. Path length counts logical moves; a path of
length $L$ contains $L+1$ logical nodes.

\begin{table}[t]
\centering
\small
\begin{tabular}{lccrr}
\toprule
Dataset & Lattice & Raster & Tokens & Test path length \\
\midrule
Maze10 & $10\times10$ & $21\times21$ & 884   & $26.44\pm14.78$ \\
Maze15 & $15\times15$ & $31\times31$ & 1,924 & $50.74\pm31.43$ \\
Maze20 & $20\times20$ & $41\times41$ & 3,364 & $79.85\pm52.63$ \\
\bottomrule
\end{tabular}
\caption{Maze representation sizes and test-set path lengths
(mean $\pm$ population standard deviation, in logical moves).
Token counts include \texttt{[SEP]} and \texttt{[EOS]} and exclude
padding and CoT actions.}
\label{tab:maze-statistics}
\end{table}

Ground-truth paths are deterministically converted
to logical actions \texttt{5}, \texttt{6}, \texttt{7}, and \texttt{8}
for up, down, left, and right, respectively, with one action per logical
move. The sequence format is
\texttt{[input][SEP][actions]=[output][EOS]}.

\subsubsection{Sudoku Datasets}
\label{app:sudoku-datasets}

We use Sudoku-Extreme\footnote{\href{https://huggingface.co/datasets/sapientinc/sudoku-extreme}{Hugging Face: Sudoku-Extreme.}},
a deduplicated collection of Sudoku puzzles with difficulty ratings
based on solver backtracking counts.
We take the first 1,000,000 examples for training, the following 1,000
for validation, and the next 1,000 for testing.
No additional data augmentation is applied in our experiments.

Each $9\times9$ puzzle and its solution are flattened in row-major order
into strings of length 81. In our representation, \texttt{0} denotes
an empty cell and \texttt{1}--\texttt{9} denote filled cells.
We serialize each pair as
\texttt{[puzzle][SEP][solution][EOS]}.
The puzzle remains clean conditioning, and only the solution span is
generated. With character-level tokenization, each pair contains
164 tokens before padding.
A correct completion must preserve the given clues and contain digits
1 through 9 exactly once in every row, column, and $3\times3$ sub-grid.
Our primary metric is sequence accuracy, defined as exact agreement
with the reference solution over all 81 cells. We also report per-cell
accuracy over the full solution span.

To assess generalization across datasets, we directly evaluate models
trained on Sudoku-Extreme on Standard Sudoku from the Kaggle
collection\footnote{\href{https://www.kaggle.com/datasets/radcliffe/3-million-sudoku-puzzles-with-ratings}{Kaggle: 3 Million Sudoku Puzzles with Ratings.}},
without additional fine-tuning.

\subsection{Implementation Details}
\label{app:Implementation-details}

\subsubsection{Architecture and Optimization Hyperparameters}
\label{app:architecture-optimization}

FLM and the discrete diffusion baselines use GPT-2-style Transformers
with bidirectional self-attention, pre-layer normalization, learned
absolute positional embeddings, and GELU feed-forward layers.
The feed-forward dimension is four times the hidden dimension.
Time conditioning uses a 512-dimensional sinusoidal embedding
followed by a three-layer MLP, whose output is added to the input
representations. At each model size, FLM and the corresponding
discrete baselines share the same backbone configuration.

Both Maze and Sudoku-Extreme use character-level tokenization with
a vocabulary of 31 tokens, including special tokens. Each example
is serialized as \texttt{[input][SEP][output][EOS]}.
The input and separator remain fixed during training and sampling.
The loss is computed only over output positions, including
\texttt{[EOS]}, while padding positions are excluded.
The unpadded sequence lengths are 884, 1,924, and 3,364 for
Maze10, Maze15, and Maze20, respectively, and 164 for Sudoku-Extreme.

\begin{table*}[t]
\centering
\small
\caption{Backbone configurations for Maze and Sudoku-Extreme.
FFN denotes the feed-forward dimension, and maximum length denotes
the capacity of the learned positional embedding table.
Parameter counts include the input and output projections,
positional embeddings, and time-conditioning network.
FLM and the discrete baselines have identical parameter counts
when using the same configuration.}
\label{tab:implementation-architectures}
\begin{tabular}{llrrrrr}
\toprule
Dataset & Model & Hidden dim. & Heads & FFN dim. & Max. length & Parameters \\
\midrule
Maze10 & 6 layers  & 768  & 6  & 3072 & 1024 & 44.94M \\
       & 8 layers  & 768  & 8  & 3072 & 1024 & 59.11M \\
       & 12 layers & 768  & 12 & 3072 & 1024 & 87.47M \\
\midrule
Maze15 & 8 layers  & 768  & 8  & 3072 & 2048 & 59.90M \\
       & 12 layers & 768  & 12 & 3072 & 2048 & 88.25M \\
       & 14 layers & 896  & 14 & 3584 & 2048 & 139.00M \\
\midrule
Maze20 & 14 layers & 896  & 14 & 3584 & 4096 & 140.83M \\
       & 20 layers & 1024 & 16 & 4096 & 4096 & 258.81M \\
       & 24 layers & 1024 & 16 & 4096 & 4096 & 309.19M \\
\midrule
Sudoku-Extreme & 12 layers & 768 & 12 & 3072 & 256 & 86.88M \\
\bottomrule
\end{tabular}
\end{table*}

We optimize all models with AdamW~\citep{loshchilov2017fixing},
using $\beta_1=0.9$, $\beta_2=0.95$, weight decay 0.01,
and gradient clipping at a global norm of 1.0.
The training budgets are 1,000 epochs for FLM and 2,000 epochs
for the discrete baselines on each Maze dataset, and 300 epochs
for all methods on Sudoku-Extreme.
Table~\ref{tab:implementation-optimization} summarizes the
optimization settings. Reported batch sizes are global,
accounting for all devices and gradient accumulation.
All evaluations use exponential moving average (EMA) weights
with decay 0.9999.

\begin{table*}[t]
\centering
\small
\caption{Optimization settings for Maze and Sudoku-Extreme.}
\label{tab:implementation-optimization}
\begin{tabular}{lcc}
\toprule
Hyperparameter & Maze & Sudoku-Extreme \\
\midrule
Optimizer & AdamW & AdamW \\
AdamW $(\beta_1,\beta_2)$ & $(0.9,0.95)$ & $(0.9,0.95)$ \\
Peak learning rate & $4\times10^{-4}$ & $1\times10^{-4}$ \\
Minimum learning rate & $4\times10^{-6}$ & $1\times10^{-6}$ \\
Learning-rate schedule & Cosine decay & Cosine decay \\
Linear warmup (optimizer updates) & 125 & 500 \\
Global batch size (default) & 256 & 1024 \\
Weight decay & 0.01 & 0.01 \\
Training dropout & 0.1 & 0.1 \\
Gradient clipping & 1.0 & 1.0 \\
EMA decay & 0.9999 & 0.9999 \\
FLM training budget (epochs) & 1000 & 300 \\
Discrete baseline training budget (epochs) & 2000 & 300 \\
\bottomrule
\end{tabular}
\end{table*}

FLM is trained on linear interpolations between independent
standard Gaussian noise and one-hot target tokens, with the
input conditions held fixed. Interpolation times are sampled
uniformly on $[0,1]$ using antithetic pairs.
We minimize clean-token cross-entropy over the supervised
output positions. The model predicts a clean-token distribution
at every time step, from which the continuous velocity field
is constructed.
D3PM, SEDD, and MDLM retain their respective discrete diffusion
objectives: a variational-bound objective, score entropy,
and masked denoising, respectively. Their corruption processes
use an absorbing mask token and preserve the same input conditions.

\subsubsection{Inference Details}
\label{app:inference-details}

We consider sampling budgets
$K\in\{1,2,4,8,16,32,64,128,256,512\}$.
For FLM, $K$ denotes the number of integration steps on the
uniform grid $t_k=k/K$. Sampling starts from standard Gaussian
noise at output positions. Given the predicted clean-token
probabilities $p_\theta(z_t,t\mid c)$, where $c$ denotes the
fixed input conditions, intermediate Euler updates take the form
\begin{equation}
z_{t_{k+1}}
=
z_{t_k}
+
\frac{1}{K}
\frac{p_\theta(z_{t_k},t_k\mid c)-z_{t_k}}
     {1-t_k+10^{-5}}.
\end{equation}
For the final interval, we directly set the state to the
predicted clean-token distribution.
Conditioning positions are restored after every update,
and tokenwise argmax is applied to the final continuous state.
This procedure requires exactly $K$ model evaluations.

For the discrete baselines, output positions are initialized
with the absorbing mask token. We use a uniform reverse-time
grid from 1 to a terminal cutoff $\epsilon$, with ancestral
reverse updates for D3PM and MDLM and score-based reverse
updates for SEDD. Categorical sampling uses temperature 1.0.

After the $K$ reverse updates, the discrete implementations
perform one additional noise-removal model evaluation.
Thus, $K$ counts reverse-process updates for these methods,
and the corresponding number of model evaluations is $K+1$.
Input conditions remain fixed throughout sampling,
and all methods use their EMA weights.

\subsubsection{FLM decoder
experiments' implementation details}
\label{app:flm-decoder-experiments}
\paragraph{Experimental setup.}
We freeze a Maze15 FLM with eight layers, eight attention heads, and
hidden width 768, using EMA weights, and cache states $Z_0,\ldots,Z_8$
from one $N=8$ trajectory per maze. Separate local CNN decoders predict
the binary solution-path mask from the maze condition and each state
representation at each step. All decoders share a $9\times9$ receptive
field, 38,177 parameters, and a 480-update training budget. We use
512/128/128 disjoint training/validation/test mazes and three decoder
initialization seeds, matching initialization and minibatch order across
conditions. Training uses class-weighted BCE; evaluation reports
unweighted BCE and path-F1 at a fixed probability threshold of 0.5.

\subsection{Full-state interventions on retained historical support}
\label{app:full-state-intervention}

\paragraph{Experimental setup.}
We test whether retained historical support affects the next prediction
while the current hard-token sequence remains unchanged.
For a strict non-top-1 candidate $v$ at position $j$, we intervene along
\[
\mathbf U_t(\lambda)
=\mZ_t-(h_t-\lambda)\mathbf e_j\mathbf e_v^\top,
\qquad 0\leq\lambda\leq h_t.
\]
This removes only the historical contribution at $(j,v)$, leaving
context, time, other coordinates, and initial noise fixed.
We recompute the complete model without renormalizing the state.
The measured effect is $\Delta_t=M_t^{(h_t)}-M_t^{(0)}$, where
$M_t^{(\lambda)}$ is the predicted probability of $v$ minus that of
the original winner $x_j$. Thus, $\Delta_t>0$ means that retaining
history increases the candidate's relative support.

The original cohort contains 898 fixed events from Sudoku and Maze20
trajectories with $N\in\{2,4\}$, including 261 correct-candidate events
($v$ equals the target token). We evaluate 17 points per path.
This is a retrospective mechanism analysis.
Reported means first average over noise seeds within each problem
and then weight problems equally; nominal 95\% confidence intervals
use 20,000 problem-cluster bootstrap resamples without
multiple-comparison correction.

\paragraph{Historical support changes predictions.}
Retaining history increases the correct-candidate margin in 232/261
original events, with positive means and confidence intervals excluding
zero in all four groups (Table~\ref{tab:full-history-endpoints}).
An additional validation cohort contains 181 eligible events, including
48 correct-candidate events, of which 47 have positive endpoint effects.
These inputs differ from earlier mechanism inputs, but are not claimed
to be unseen during checkpoint development; some initial-noise keys
are shared with the original cohort. The additional Maze20 $N=4$
group has only five correct-candidate events, so its result remains
limited by sample size.
For any nonzero $\Delta_t$,
Equation~\ref{eq:history-mechanism-hard-error} implies a worst-case
margin-reproduction error of at least $|\Delta_t|/2$ for a readout
using only the fixed context, time, and hard state.

\begin{table}[t]
\centering
\small
\setlength{\tabcolsep}{6pt}
\caption{Effects of retaining history on correct-candidate probability
margins, in percentage points. Means and confidence intervals weight
problems equally; the last column counts events with positive effects.}
\label{tab:full-history-endpoints}
\begin{tabular}{@{}lcc@{}}
\toprule
Setting & $\Delta_t$ [95\% CI] & Positive events \\
\midrule
\multicolumn{3}{@{}l}{\textit{Original cohort}} \\
Sudoku, $N=2$ & $1.456\ [0.767,2.293]$ & 60/73 \\
Sudoku, $N=4$ & $1.634\ [0.961,2.406]$ & 63/71 \\
Maze20, $N=2$ & $0.188\ [0.045,0.356]$ & 66/73 \\
Maze20, $N=4$ & $0.921\ [0.316,1.641]$ & 43/44 \\
\midrule
\multicolumn{3}{@{}l}{\textit{Additional validation cohort}} \\
Sudoku, $N=2$ & $1.168\ [0.213,2.361]$ & 18/19 \\
Sudoku, $N=4$ & $1.439\ [0.231,3.084]$ & 10/10 \\
Maze20, $N=2$ & $0.137\ [0.000117,0.389]$ & 14/14 \\
Maze20, $N=4$ & $0.345\ [0.017,0.964]$ & 5/5 \\
\bottomrule
\end{tabular}
\end{table}

\paragraph{How the response propagates.}
Following Theorem~\ref{thm:history-mechanism-full-influence}, we
integrate the four signed contributions along the same full-state path:
\[
I_X=\int_0^{h_t}\mathcal C_X(\lambda)\,d\lambda,
\qquad X\in\{Q,K,V,R\}.
\]
The calculation includes all first-layer heads and receiving positions,
with gradients through the entire downstream network.
Q has the largest mean contribution in each group, and V is positive
in all four groups (Table~\ref{tab:full-history-contributions}).
The bypass contribution R is positive in Sudoku and near zero in
Maze20; it is distinct from the solver's direct state-persistence term.
K intervals include zero, which does not establish that K is unused.

V-other, the contribution through receiving positions $i\neq j$,
is positive with intervals excluding zero in every group.
This shows that history at $j$ can affect its final prediction through
values read by other positions and processed by subsequent layers,
as described by Equation~\ref{eq:history-cross-position}.
In Sudoku, V-other exceeds total V, reflecting cancellation with the
contribution at receiver $j$; these are signed effects, not mechanism
percentages.

\begin{table}[t]
\centering
\small
\setlength{\tabcolsep}{4pt}
\caption{Signed path-integral contributions for correct candidates in
 the original cohort, in probability percentage points. Each cell gives
 the mean and nominal 95\% confidence interval. V-other is the part of V
 transmitted through first-layer receiving positions $i\neq j$.
 Negative estimates rounded to zero are shown as $-0.000$.}
\label{tab:full-history-contributions}
\begin{tabular}{@{}lcccc@{}}
\toprule
Pathway & \shortstack{Sudoku\\$N=2$}
& \shortstack{Sudoku\\$N=4$}
& \shortstack{Maze20\\$N=2$}
& \shortstack{Maze20\\$N=4$} \\
\midrule
Q
& \shortstack{$1.001$\\$[0.527,1.576]$}
& \shortstack{$1.114$\\$[0.655,1.641]$}
& \shortstack{$0.148$\\$[0.040,0.276]$}
& \shortstack{$0.656$\\$[0.235,1.151]$} \\
\addlinespace
K
& \shortstack{$-0.002$\\$[-0.005,0.001]$}
& \shortstack{$0.000$\\$[-0.002,0.002]$}
& \shortstack{$-0.000$\\$[-0.002,0.001]$}
& \shortstack{$-0.005$\\$[-0.012,0.001]$} \\
\addlinespace
V
& \shortstack{$0.052$\\$[0.019,0.096]$}
& \shortstack{$0.050$\\$[0.017,0.085]$}
& \shortstack{$0.040$\\$[0.007,0.082]$}
& \shortstack{$0.270$\\$[0.070,0.522]$} \\
\addlinespace
R
& \shortstack{$0.405$\\$[0.214,0.635]$}
& \shortstack{$0.470$\\$[0.274,0.696]$}
& \shortstack{$-0.001$\\$[-0.001,0.000]$}
& \shortstack{$-0.000$\\$[-0.002,0.001]$} \\
\midrule
V-other
& \shortstack{$0.093$\\$[0.043,0.160]$}
& \shortstack{$0.099$\\$[0.053,0.153]$}
& \shortstack{$0.032$\\$[0.006,0.064]$}
& \shortstack{$0.223$\\$[0.049,0.450]$} \\
\bottomrule
\end{tabular}
\end{table}

These results support computational use of retained historical support,
with positive effects for most evaluated correct candidates.
They concern the next prediction margin, not final puzzle accuracy,
and do not establish uniformly beneficial effects or positive
derivatives throughout a continuous path.

% Requires \usepackage{booktabs}
\subsection{Complete Results across Model Capacities}
\label{app:maze-capacity}

Table~\ref{tab:maze-capacity-full} reports the complete model-capacity
sweep underlying Figures~\ref{fig:maze-capacity-curves}
and~\ref{fig:maze-capacity-fixed-steps}.
For each maze task, we evaluate FLM and MDLM at three matched model
sizes across ten denoising step budgets.

\begin{table*}[t]
    \centering
    \caption{
    Sequence accuracy (\%) of FLM and MDLM across model capacities
    and denoising steps on Maze10, Maze15, and Maze20.
    Parameter counts are matched between methods at each model size.
    }
    \label{tab:maze-capacity-full}
    \setlength{\tabcolsep}{3pt}

    \begin{tabular*}{\textwidth}{
        @{\extracolsep{\fill}}ll*{10}{c}@{}
    }
        \toprule
        Parameters & Method
        & \multicolumn{10}{c}{Denoising steps} \\
        \cmidrule(lr){3-12}
        & & 1 & 2 & 4 & 8 & 16 & 32 & 64 & 128 & 256 & 512 \\
        \midrule

        \multicolumn{12}{l}{\textit{Maze10}} \\
        \midrule
        44.94M & FLM
        & 57.6 & 72.7 & 76.0 & 77.2 & 77.7
        & 78.0 & 78.3 & 78.3 & 78.3 & 78.3 \\
        & MDLM
        & 51.1 & 57.4 & 64.4 & 69.0 & 78.1
        & 84.3 & 87.3 & 89.9 & 91.0 & 90.8 \\
        \addlinespace
        59.11M & FLM
        & 95.7 & 99.5 & 99.5 & 99.5 & 99.5
        & 99.5 & 99.6 & 99.6 & 99.6 & 99.6 \\
        & MDLM
        & 87.2 & 88.8 & 92.1 & 94.6 & 95.2
        & 98.3 & 98.5 & 98.6 & 98.2 & 98.6 \\
        \addlinespace
        87.47M & FLM
        & 100.0 & 100.0 & 100.0 & 100.0 & 100.0
        & 100.0 & 100.0 & 100.0 & 100.0 & 100.0 \\
        & MDLM
        & 97.2 & 97.6 & 98.5 & 99.0 & 99.2
        & 99.7 & 99.9 & 99.8 & 99.9 & 99.7 \\
        \midrule

        \multicolumn{12}{l}{\textit{Maze15}} \\
        \midrule
        59.90M & FLM
        & 58.3 & 78.7 & 82.9 & 84.0 & 84.3
        & 84.4 & 84.4 & 84.4 & 84.4 & 84.4 \\
        & MDLM
        & 27.9 & 31.2 & 39.2 & 45.0 & 54.3
        & 64.0 & 73.5 & 79.1 & 82.0 & 82.1 \\
        \addlinespace
        88.25M & FLM
        & 95.9 & 98.9 & 99.1 & 99.1 & 99.1
        & 99.1 & 99.2 & 99.2 & 99.2 & 99.2 \\
        & MDLM
        & 49.8 & 54.9 & 62.5 & 69.6 & 76.2
        & 83.7 & 88.3 & 91.8 & 93.7 & 95.3 \\
        \addlinespace
        139.00M & FLM
        & 99.1 & 99.8 & 99.9 & 99.9 & 99.9
        & 99.9 & 99.9 & 99.9 & 99.9 & 99.9 \\
        & MDLM
        & 74.7 & 79.2 & 83.1 & 87.3 & 91.5
        & 94.7 & 96.2 & 97.7 & 96.9 & 97.1 \\
        \midrule

        \multicolumn{12}{l}{\textit{Maze20}} \\
        \midrule
        140.83M & FLM
        & 33.2 & 47.0 & 51.2 & 53.1 & 53.6
        & 53.7 & 53.9 & 54.1 & 54.2 & 55.7$^{*}$ \\
        & MDLM
        & 26.2 & 29.0 & 33.9 & 38.1 & 45.2
        & 52.6 & 63.4 & 72.8 & 77.0 & 79.2 \\
        \addlinespace
        258.81M & FLM
        & 71.4 & 88.1 & 91.3 & 92.1 & 92.4
        & 92.5 & 92.5 & 92.6 & 92.6 & 92.9$^{*}$ \\
        & MDLM
        & 35.6 & 41.2 & 48.2 & 55.8 & 63.0
        & 70.6 & 80.2 & 83.7 & 87.9 & 90.1 \\
        \addlinespace
        309.19M & FLM
        & 100.0 & 100.0 & 100.0 & 100.0 & 100.0
        & 100.0 & 100.0 & 100.0 & 100.0 & 100.0 \\
        & MDLM
        & 43.0 & 50.4 & 55.9 & 64.5 & 70.7
        & 79.6 & 84.9 & 89.5 & 92.5 & 92.9 \\
        \bottomrule
    \end{tabular*}
\end{table*}

\subsection{Wall-Clock Inference Efficiency}
\label{app:inference-efficiency}

We measure inference latency on a single A100 80\,GB GPU using FP32 with TF32 disabled. Methods use matched Transformer depths and widths within each task. We report mean latency over 32 shared test instances at batch size one, following two warm-up calls. Timing uses GPU synchronization and covers the complete sampler, excluding model loading, tokenization, data transfer, and scoring.

Here, $N$ denotes nominal sampling steps. FLM uses $N$ model evaluations, while standard discrete samplers use $N+1$, including terminal denoising. Confidence remasking uses 2 and 128 evaluations at $N=1$ and $N=64$, respectively. AR+CoT uses cached greedy decoding with fixed output lengths of 480, 1024, and 1760 tokens for Maze10, Maze15, and Maze20, with EOS stopping disabled.

\begin{table*}[t]
\centering
\small
\caption{
Mean inference latency in seconds per instance on one A100 (batch size one; lower is better). $\dagger$ marks randomly initialized models used solely for architecture-level timing; other entries use trained EMA weights. Sampling budgets do not enforce matched accuracy.
}
\label{tab:maze-inference-latency}
\begin{tabular}{lrrrrrr}
\toprule
& \multicolumn{2}{c}{Maze10}
& \multicolumn{2}{c}{Maze15}
& \multicolumn{2}{c}{Maze20} \\
\cmidrule(lr){2-3}
\cmidrule(lr){4-5}
\cmidrule(lr){6-7}
Method
& $N=1$ & $N=64$
& $N=1$ & $N=64$
& $N=1$ & $N=64$ \\
\midrule
FLM
& \textbf{0.013} & 0.656
& \textbf{0.033} & 2.079
& \textbf{0.151} & 9.740 \\
MDLM
& 0.023 & 0.693 & 0.067 & 2.144 & 0.303 & 9.845 \\
MDLM + Entropy
& 0.022 & 0.714 & 0.067 & 2.162 & 0.303 & 9.902 \\
MDLM + Confidence
& 0.023 & 1.371 & 0.066 & 4.204 & 0.303 & 19.392 \\
SEDD
& $0.023^\dagger$ & $0.705^\dagger$
& 0.067 & 2.152 & 0.305 & 9.950 \\
SEDD + Entropy
& $0.026^\dagger$ & $0.724^\dagger$
& 0.067 & 2.172 & 0.307 & 9.978 \\
SEDD + Confidence
& $0.022^\dagger$ & $1.369^\dagger$
& 0.067 & 4.225 & 0.306 & 19.587 \\
D3PM
& $0.025^\dagger$ & $0.645^\dagger$
& $0.065^\dagger$ & $2.082^\dagger$
& 0.301 & 9.808 \\
ReMDM
& 0.021 & 0.651 & 0.066 & 2.089 & 0.301 & 9.772 \\
\midrule
AR+CoT
& \multicolumn{2}{c}{1.424}
& \multicolumn{2}{c}{4.328}
& \multicolumn{2}{c}{$17.384^\dagger$} \\
\bottomrule
\end{tabular}
\end{table*}

FLM achieves the lowest latency at $N=1$ on all three tasks.
At $N=64$, its cost is comparable to standard discrete
samplers, while confidence remasking approximately doubles
latency. Together with Table~\ref{tab:maze-main}, these results
show that FLM combines strong low-step accuracy with low
wall-clock inference cost.

\end{document}